\titlespacing\section{0pt}{12pt plus 1pt minus 1pt}{0pt plus 1pt minus 1pt}
\titlespacing\subsection{0pt}{12pt plus 1pt minus 1pt}{0pt plus 1pt minus 1pt}
\titlespacing\subsubsection{0pt}{12pt plus 1pt minus 1pt}{0pt plus 1pt minus 1pt}
\newcommand\modelname{ProtSCAPE}
\title{\modelname: Mapping the landscape of protein conformations in molecular dynamics}
\author[Y. Zhu et al.]{\small
Siddharth Viswanath\textsuperscript{1}, Dhananjay Bhaskar\textsuperscript{1,2}, David R. Johnson\textsuperscript{3}, João Felipe Rocha\textsuperscript{1}, Egbert Castro\textsuperscript{4},\\
{\small \textbf{Jackson D. Grady\textsuperscript{1}, Alex T. Grigas\textsuperscript{4}, Michael A. Perlmutter\textsuperscript{3,5}, Corey S. O'Hern\textsuperscript{4}, Smita Krishnaswamy\textsuperscript{1,2,4}}}\\
{\small Correspondence: \email{smita.krishnaswamy@yale.edu}}
}
\newcommand{\STAB}[1]{\begin{tabular}{@{}c@{}}#1\end{tabular}}
\newcommand\blfootnote[1]{%
  \begingroup
  \renewcommand\thefootnote{}\footnote{#1}%
  \addtocounter{footnote}{-1}%
  \endgroup
}
\begin{document}

\maketitle

\begin{abstract}
Understanding the dynamic nature of protein structures is essential for comprehending their biological functions. While significant progress has been made in predicting static folded structures, modeling protein motions on microsecond to millisecond scales remains challenging. To address these challenges, we introduce a novel deep learning architecture, \underline{Pro}tein \underline{T}ransformer with \underline{Sc}attering, \underline{A}ttention, and \underline{P}ositional \underline{E}mbedding (\modelname), which leverages the geometric scattering transform alongside transformer-based attention mechanisms to capture protein dynamics from molecular dynamics (MD) simulations. \modelname{} utilizes the multi-scale nature of the geometric scattering transform to extract features from protein structures conceptualized as graphs and integrates these features with dual attention structures that focus on residues and amino acid signals, generating latent representations of protein trajectories. Furthermore, \modelname{} incorporates a regression head to enforce temporally coherent latent representations. 
Importantly, we demonstrate that \modelname{} generalizes effectively from short to long trajectories and from wild-type to mutant proteins, surpassing traditional approaches by delivering more precise and interpretable upsampling of dynamics.   
\end{abstract}

\blfootnote{{\small \textsuperscript{1}Department of Computer Science, Yale University; \textsuperscript{2}Department of Genetics, Yale University; \textsuperscript{3}Program in Computing, Boise State University; \textsuperscript{4}Computational Biology and Bioinformatics Program, Yale University; \textsuperscript{5}Department of Mathematics, Boise State University }}

\section{Introduction}

While the prediction of the static folded structure of proteins from their sequence has seen large improvements recently, protein structures possess dynamics that are essential for their functions, such as catalysis, the binding of molecules and other proteins, allostery and signaling via modification such as phosphorylation~\citep{intro:EisenmesserNature2005,intro:Wolf-WatzNature2004,intro:MaPNSA1998,intro:KarplusPNSA2005,intro:LisiPNSA2017,intro:VakserCurOpinStrucBio2020}. Experimental methods can provide estimates of local vibrations and macroscopic pictures of protein motion, such as crystallographic B-factors and Nuclear Magnetic Resonance (NMR) relaxation times. Molecular dynamics (MD) of proteins have been developed to model protein motion at atomic resolution over the last several decades~\cite{intro:OrozcoChemSocRev2014}. The level of detail needed to properly capture proteins is separated by several orders of magnitude from functional biological motions. For example, protein MD simulations are typically stable when using a \SI{2}{\femto\second} time step, while functional motions often occur on the \si{\micro\second}-\si{\milli\second} time scale. These time scales have become more obtainable experimentally, however; data analysis remains a challenge~\cite{intro:Lindorff-LarsenSceicen2011}. Most techniques rely on one-dimensional projections of these protein trajectories, which might not fully encompass the complexity of protein dynamics. This can lead to incomplete or misleading interpretations, particularly for proteins with complex structures or multiple stable conformations, underscoring the need for developing improved methodologies for analyzing and representing protein trajectories.


Here, we present \underline{Pro}tein \underline{T}ransformer with \underline{Sc}attering \underline{A}ttention and  \underline{P}ositional \underline{E}mbedding (\modelname{}), a deep neural network that learns a latent representations of MD trajectories through a novel transformer-based encoder-decoder architecture, which combines multiscale representation of the protein structure obtained through learnable geometric scattering with attention over the protein sequence and amino acids. In \modelname{}, the latent representations of protein conformations are controlled by a regression network which predicts timestamps. This enables the learning of representations that are jointly informed by the protein structure and its evolution. At the same time, the dual attention mechanism identifies specific residues and their constituent amino acids that collectively confer the protein with the flexibility needed to transition between conformations during the MD trajectory. Please note that we use the term `residue' to refer to a specific position along the protein sequence, and the term `amino acid' when referencing one of the 20 essential or non-essential amino acids in the human proteome that are present in the protein sequence (regardless of position).

\textbf{Notable features of the \modelname{} architecture}:

\begin{enumerate}[label=\arabic{enumi}), itemsep=1ex]
\item It uses a novel combination of learnable geometric scattering \citep{tong2022learnable}, which captures the local and global structure of the protein graph, with a transformer that employs dual attention over the constituent amino acids and residue sequence. The attention scores correlate with residue flexibility, allowing for the identification of the most flexible regions in the MD trajectory. This enables the model to suggest flexible residues as candidates for generating protein mutants.
\item It projects MD trajectories into a low-dimensional latent space visualization and other downstream tasks. These representations are \emph{rotation and translation invariant} (due to graph construction) and \emph{permutation equivariant} (through geometric scattering). The decoder outputs pairwise residue distances and dihedral angle differences to reinforce these invariances.
\item It utilizes dual regression networks for recovering time and protein structure from latent representations. This allows for learning a \emph{temporally coherent latent representation}, as well as decoding and visualization of protein structures directly from the latent space. Furthermore, \modelname{} can be modified to incorporate additional regression heads to predict other relevant properties such as energy.
\end{enumerate}


We show that \modelname{} can leverage its learned representations to understand phase transitions (e.g., open to close states) and capture stochastic switching between two states. Moreover, \modelname{} generalizes well to long-term dynamics even when trained on short trajectories and to mutant trajectories when trained on wildtype trajectories. The latter allows us to understand how the mutants sample conformational space. Together, these capabilities highlight \modelname{} as a powerful tool to understand the dynamics of protein conformational changes.

\section{Methods}

\modelname{} consists of three primary parts. The first is a learnable geometric scattering module, a multi-layer feature extractor that uses a cascade of wavelet filters to identify meaningful multi-scale structural information of the input protein graphs. The next two parts are transformer models, one of which uses attention across the nodes (residues) in the graph and the other uses attention across the signals (amino acids). The results are then passed into a regularized autoencoder to provide a structured latent representation of the protein structure graphs for downstream tasks.

In each of these parts,  protein structures at a specific time point will be represented as a graph $G_{t} = (V, E_{t})$ where the nodes $V$ represent residues and edges $E_t$ are defined using an (undirected) $k$-NN construction based on the $\ell^2$ distances between  the 3-D coordinates of the centers of the residues (obtained by averaging the 3-D coordinates of the atoms). We will consider signals (functions) $\mathbf{x}$ defined on $V$ constructed through a one-hot encoding of the amino acids.

\begin{figure}[htbp]
    \centering
    \includegraphics[width=\textwidth]{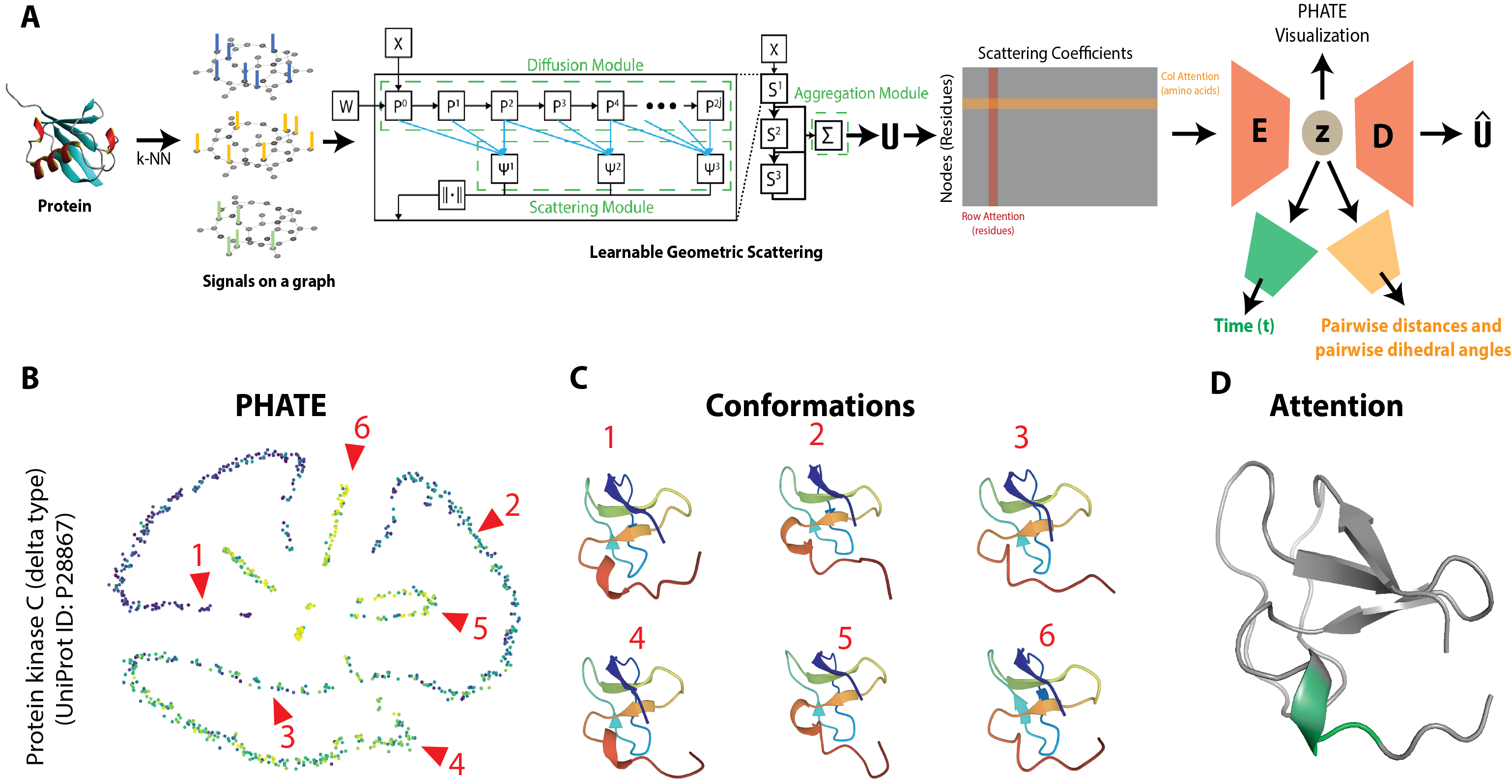}
    \caption{(A) The ProtSCAPE architecture. (B) PHATE plot of latent representations. (C) Conformations of the protein Kinase C. (D) High attention scores correspond to more flexible residues.}
    \label{fig:protscape}
\end{figure}

\subsection{Learnable Geometric Scattering}
\label{sec:legs}

The geometric scattering transform \citep{gao2019geometric,gama2019diffusion,zou_graph_2019} is a multi-layer, multi-scale, feed-forward feature extractor for processing a signal $\mathbf{x}$ defined on the vertices of a graph $G=(V,E)$.  It is built upon diffusion wavelets $\{\Psi_j\}_{j=0}^J$ defined by $\Psi_0=I-P,$ and 
$\Psi_j=P^{2^{j-1}}-P^{2^j}, 1\leq j\leq J,$
where $D$ and $A$ are the degree and adjacency matrices and $P=\frac{1}{2}\left(I+AD^{-1}\right)$ is the lazy random walk matrix. Given these wavelets, first- and second-order scattering coefficients can be defined by 
$U[j]\mathbf{x}\coloneqq M\Psi_j\mathbf{x}$,$U[j_1,j_2]\mathbf{x}\coloneqq M\Psi_{j_2}M\Psi_{j_1}\mathbf{x}$, where $M\mathbf{x}(v)=|\mathbf{x}(v)|$.



For complex tasks, the pre-chosen dyadic scales, $2^j$, used in the definition of $\Psi_j$ may be overly rigid. Thus, we instead consider the generalized scattering transform introduced in  \citet{tong2022learnable} featuring generalized diffusion wavelets $\widetilde{\mathcal{W}}_J=\{\widetilde{\Psi}_j\}_{j=0}^J$  of the form
$\widetilde{\Psi}_0=I-P^{t_1}$, where $t_1,\ldots,t_{J+1}$ $\widetilde{\Psi}_j\coloneqq P^{t_{j}}-P^{t_{j+1}},\text{ for }1\leq j \leq J,$
is an arbitrary increasing sequence of diffusion scales with $t_1\geq 1$ that can be learned in via a differentiable selection matrix. We  let $\widetilde{U}[j]\mathbf{x}$ and $\widetilde{U}[j_1,j_2]\mathbf{x}$ denote the generalized scattering coefficients constructed from the $\widetilde{\Psi}_j$. We then let  $\mathcal{\widetilde{U}}\mathbf{x}$ denote a matrix of all the generalized scattering coefficients associated to $\mathbf{x}$ (where rows correspond to vertices).

\subsection{Transformer with Dual Attention}
\label{sec: transformer}
Once the scattering coefficients $\widetilde{\mathcal{U}}\mathbf{x}$ are computed, we use a dual transformer module to obtain  multi-headed attention on (1) the residues and (2) the amino acids. 
We first compute a positional encoding of the scattering coefficients before feeding it into the transformer.
 For an input of dimensions $n \times d$, the $n\times 2d$  positional encoding matrix $R$ is computed using the cosine encoding method, where $R_{i,2j} = \sin\left(\frac{i}{10000^{2j/d}}\right), R_{i,2j-1} = \cos\left(\frac{i}{10000^{2j/d}}\right)$.
We next define 
%
    $\mathbf{S} = R \Vert \mathcal{U}\mathbf{x}$,
where $\Vert$ denotes horizontal concatenation.
%
We now feed $\mathbf{S}$ into the Multi-Headed Attention. To do this, we first  compute the query $Q$, key $K$, and value $V$ vectors from the input embedding $\mathbf{S}$,  
    $Q = W^{Q}\mathbf{S}$, $K = W^{K}\mathbf{S}$, 
 $V = W^{V}\mathbf{S}$.
where $W^Q, W^K,$ and $W^V$ are learnable weight matrices.
We then compute $\operatorname{Attention}(Q,K) = \sigma\left(\frac{QK^{T}}{\sqrt{d_{k}}}\right)$ where $\sigma$ represents the softmax function applied column wise 
and $d_k$ is the number of columns 
of the query vector $Q$. We then update the values by $V\rightarrow \text{Attention}(Q,K)V.$
We next feed $V$ into a two-layer MLP with ReLU activations. 

The second transformer is the same as the first, but applied to $\mathcal{U}^T$ (without positional encoding) and computes attention over amino acids rather than the scattering coefficients. The outputs of both transformers are then concatenated to form the final embedding, $\mathbf{p}$.


\subsection{Training and Loss Formulation}
\label{sec: loss}
The embedding $\mathbf{p}$ is fed into an encoder network $E$ 
in order to compute the latent representation $\mathbf{z} = E(\mathbf{p})$. $\mathbf{z}$ is then passed into three separate modules: two regression networks ($N$ and $M$) and a decoder network $D$. 
    (i) The first regression network $N$ 
    aims to predict the time point $t$ corresponding to each $\mathbf{z}$  and utilizes a mean squared error loss given as $\mathcal{L}_{\text{t}} = \frac{1}{n}\|\hat{t} - t\|^2$,  $\hat{t}=N(\mathbf{z})$. 
    (ii) The second regression network $M$ 
    aims to predict the pairwise distances and the dihedral angle  between residues  and also employs a mean squared error loss: $\mathcal{L}_{\text{c}} =  \frac{1}{n}\|\hat{C} - C\|^2$, where $C$ is the ground truth pairwise distances and dihedral angles and $\hat{C}=M(\mathbf{z})$ is our estimate.
    (iii) Finally, the scattering coefficients are reconstructed by passing it through a decoder $D$ which consists of a reconstruction loss as $\mathcal{L}_{\text{s}} =  \frac{1}{n}\|\hat{\mathcal{U}}\mathbf{x} - \mathcal{U}\mathbf{x}\|^2$, where $\hat{\mathcal{U}}\mathbf{x}$$=D(E(\mathbf{p}))$ are the reconstructed scattering coefficients.
Our total loss is then given by 
    $\mathcal{L} = \alpha\mathcal{L}_{\text{t}} + \beta\mathcal{L}_{\text{s}} + (1-\alpha+\beta)\mathcal{L}_{\text{c}}.$
However, in the case of large proteins with hundreds of residues, we implement a learnable node embedding strategy to make training more memory-tractable (see Appendix \ref{sec:nodes} for details).

\vspace*{-.4cm}

\begin{table}[h]
\renewcommand{\arraystretch}{1.3}
\small
\caption{Mean absolute error (mean $\pm$ std dev. of 20 withheld time windows; lower is better) of decoded mutant structures compared to ground truth at held-out time points. \underline{\textbf{Best}} results are bold and underlined. \underline{Second best} results are underlined.}
\centering
\resizebox{\textwidth}{!}{\begin{tabular}{cccccccccccc}
\toprule
\thead{Model} & \thead{Model} & \multicolumn{5}{c}{\thead{Pairwise Residue COM Distance (MAE $\pm$ std)}} & \multicolumn{5}{c}{\thead{Pairwise Residue Dihedral Angle Difference (MAE $\pm$ std)}} \\
\thead{Family} & & \thead{p.T12P} & \thead{p.M36G} & \thead{p.F13E} & \thead{p.C14I} & \thead{p.N37P} & \thead{p.T12P} & \thead{p.M36G} & \thead{p.F13E} & \thead{p.C14I} & \thead{p.N37P} \\
\midrule
\multirow{5}{*}{\STAB{\rotatebox[origin=c]{90}{\textbf{GNNs}}}} & GIN           & 0.8507 ± 0.0256 & 0.5191 ± 0.0109 & 0.5171 ± 0.0095 & 0.7434 ± 0.0816 & 4.9396 ± 0.0848 & 1.3813 ± 0.0244 & 4.1677 ± 0.0424 & 1.6896 ± 0.0265 & 1.4068 ± 0.0217 & 1.5685 ± 0.0235 \\
                                                             & GAT           & 0.5524 ± 0.0597 & 0.4869 ± 0.0113 & 0.4780 ± 0.0116 & 0.6466 ± 0.0265 & 0.5760 ± 0.0359 & 1.7409 ± 0.0355 & 1.3913 ± 0.0150 & 1.4044 ± 0.0095 & 1.5280 ± 0.0321 & 5.3738 ± 0.0680 \\
                                                             & GCN           & 0.5619 ± 0.0630 & 37.5953 ± 0.2552 & 1.0995 ± 0.0161 & 0.5637 ± 0.0510 & 0.5471 ± 0.0282 & 1.8070 ± 0.0293 & 1.3600 ± 0.0179 & 1.3731 ± 0.0134 & 1.4352 ± 0.0260 & 1.5096 ± 0.0199 \\
                                                             & GraphSAGE     & 0.5456 ± 0.0594 & 0.4968 ± 0.0056 & 0.5148 ± 0.0133  & 0.5756 ± 0.0687 & 0.6631 ± 0.0233 & 1.5168 ± 0.0220 & 1.3662 ± 0.0170 & 1.4405 ± 0.0112 & 1.3810 ± 0.0222 & 1.4005 ± 0.0183 \\
                                                             & $E(n)$-EGNN   & 0.4862 ± 0.0491 & 0.4660 ± 0.0109 & 0.4536 ± 0.0070 & 0.4946 ± 0.0446 & 0.8045 ± 0.0761 & 1.3687 ± 0.0199 & 1.3142 ± 0.0201  & 1.2891 ± 0.0105 & 1.3854 ± 0.0265 & 1.4127 ± 0.0285  \\
\hline
\multirow{3}{*}{\STAB{\rotatebox[origin=c]{90}{\textbf{Ours}}}} & w/o time regression   & \underline{0.2559 $\pm$ 0.0889} & \underline{0.1439 $\pm$ 0.0143} & \underline{0.1350 $\pm$ 0.0148} & \textbf{\underline{0.2642 $\pm$ 0.1041}} & \textbf{\underline{0.3058 $\pm$ 0.0492}} & \underline{0.9117 $\pm$ 0.1276} & \underline{0.7722 $\pm$ 0.0696} & \underline{0.6980 $\pm$ 0.0581} & 1.0470 $\pm$ 0.2309 & \underline{1.0323 $\pm$ 0.1007} \\
                                                             & w/o attention         & 0.8581 $\pm$ 0.0404 & 0.5075 $\pm$ 0.0352 & 0.2054 $\pm$ 0.0159 & 0.4502 $\pm$ 0.0971 & 0.6302 $\pm$ 0.0544 & 1.3848 $\pm$ 0.1259 & 0.8957 $\pm$ 0.0386 & 0.7158 $\pm$ 0.0509 & 1.0400 $\pm$ 0.1831 & 1.0479 $\pm$ 0.0680 \\
                                                             & \modelname{}        & \textbf{\underline{0.2506 ± 0.0947}} & \textbf{\underline{0.1261 ± 0.0132}} & \textbf{\underline{0.1173 ± 0.0124}} & \underline{0.2651 ± 0.1083} & \underline{0.3070 ± 0.0504} & \textbf{\underline{0.9028 ± 0.1249}} & \textbf{\underline{0.7614 ± 0.0670}} & \textbf{\underline{0.6873 ± 0.0565}} & \textbf{\underline{1.0351 ± 0.2268}} & \textbf{\underline{1.0225 ± 0.1004}} \\
\bottomrule
\end{tabular}}
\label{tab:interpolation_mutants}
\end{table}

\vspace*{-.6cm}

\section{Experimental Results}

We consider MD simulations of proteins, which provide 3-D coordinates of the protein at an atomic resolution at multiple time points (frames).\footnote{Code available at \url{https://github.com/KrishnaswamyLab/ProtSCAPE/}} We coarse-grain the atomic coordinates to compute the center of mass of each residue in the protein and construct an unweighted $k$-NN graph ($k=5$) of the protein structure with residues as vertices and undirected edges between all residues within $k$-neighborhood distance. We train ProtSCAPE on datasets consisting of graphs of the same protein at different time points in the MD simulation. The model is trained to learn latent representations of the protein, from which we predict the pairwise distances of the center of mass of the residues as well as the pairwise dihedral angles of the residues in the protein. Note that the number of residues remains constant during the simulation. We uniformly sample and withhold 20 windows of 10 time points each, train our model on the remaining time points and evaluate our model based on its ability to reconstruct the pairwise distances and dihedral angles of the withheld points. Interpolation results on three different proteins in the ATLAS database \cite{vander2024atlas} are provided in Appendices \ref{app:interp_dist} and \ref{app:interp_coords}. We evaluate the organization of the latent representations in Appendix \ref{app:dirichlet}. In Appendix \ref{app:deshaw}, we analyze MD simulations of the GB3 protein, showing that \modelname{} can identify stochastic switching between two meta-stable conformations.

\vspace*{-.2cm}

\begin{wrapfigure}{r}{0.3\textwidth}
\centering
\includegraphics[width=0.99\linewidth]{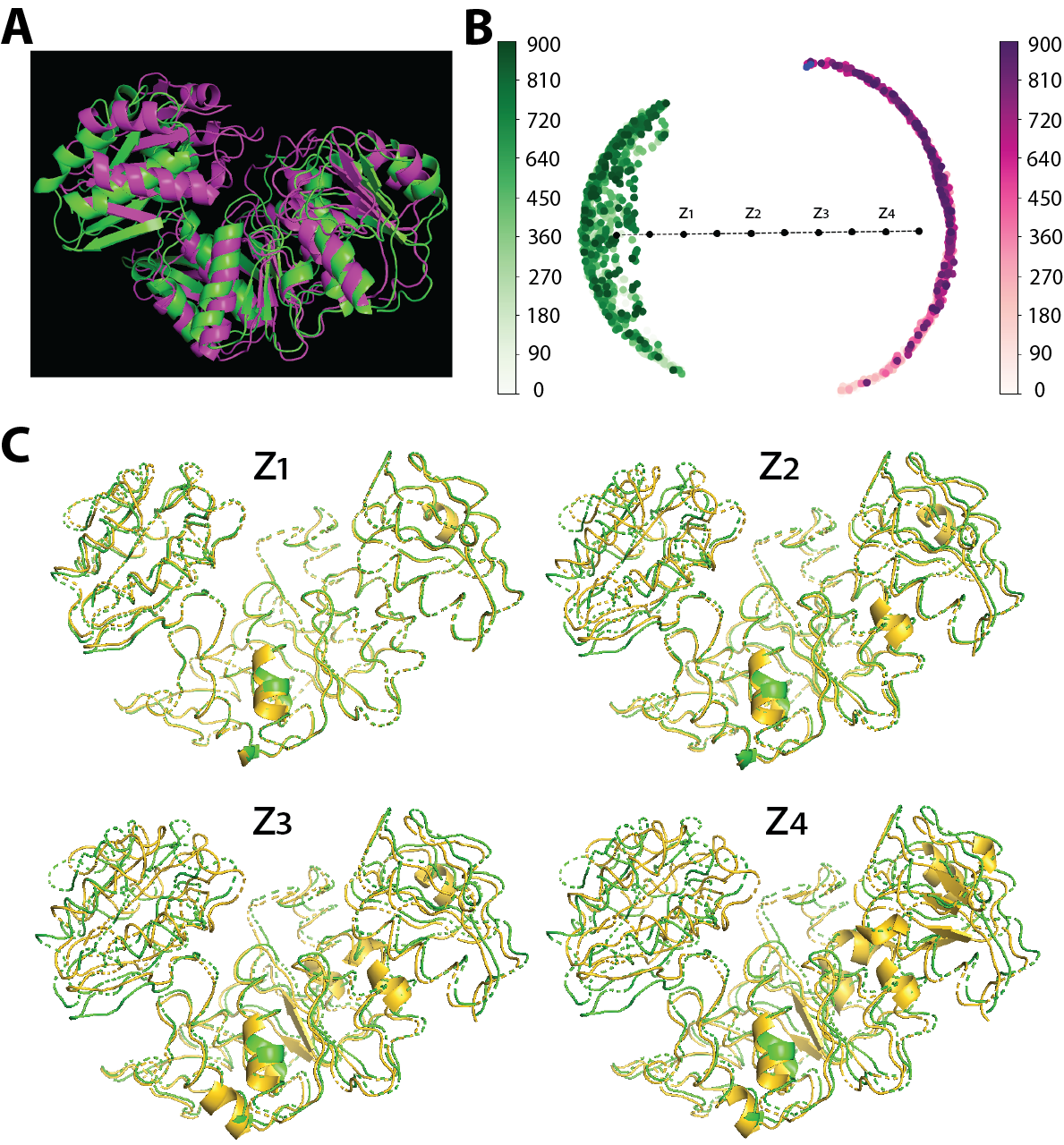}
\caption{Interpolation between open and closed conformations of the MurD protein}
\label{fig:murd_interp}
\end{wrapfigure}

\paragraph{Latent interpolation of open-close transition in MurD} The MurD (UDP-N-acetylmuramoyl-L-alanine:D-glutamate ligase) \cite{Bertrand1997-aa} protein exists in open and closed conformation as seen in Figure \ref{fig:murd_interp}(A). In the open conformation the ligands are able to bind to the pockets in the protein whereas in closed conformation, the pockets are blocked hence preventing ligand binding. We use the MD simulations of the open and closed states \cite{Degiacomi2019-za} to train \modelname{} and obtain the latent representations of the MurD protein. As seen in Figure \ref{fig:murd_interp}(B), the PHATE visualization of latent representations shows two clusters corresponding to the open and closed conformations (colored by green and purple respectively) in the latent representations. We construct a linear interpolant between the cluster centroids of the open and closed states and decode their intermediate structures. We visualize these intermediate structures transitioning from open to close states using PyMoL, revealing a hinge-like mechanism. Note that the structures of the open and closed states \cite{Bertrand1999-oa,Bertrand2000-hh} as well as the intermediate states \cite{Sink2016-jv} of MurD have been experimentally determined, and the hinge mechanism obtained by \modelname{} is consistent with the experimental findings.

\vspace*{-.2cm}

\paragraph{Generalizability} We demonstrate that \modelname{} generalizes from short-to-long (S2L) trajectories and from wildtype-to-mutant (WL2M) trajectories. In Appendix \ref{app:s2l}, we show that the model trained on short trajectories can successfully embed and reconstruct long trajectories. In Appendix \ref{app: WT2M}, we show that the model trained on wildtype trajectories can effectively embed and reconstruct dynamics of missense mutations (single substitution mutants). Table \ref{tab:interpolation_mutants} shows that \modelname{} is the best performing model when compared to GNNs in terms of decoding mutant structures.



\bibliographystyle{unsrtnat}
\bibliography{reference}

\appendix
\newpage 
\section*{Appendix}

\renewcommand{\thesubsection}{\Alph{subsection}}


\subsection{Related Works}
\label{app:related}
Computational biology has witnessed substantial interest in understanding protein dynamics, protein-protein interactions, protein-ligand interactions, and protein-nucleic acid interactions. Numerous computational methods have been developed to address determination of protein tertiary structure from the primary amino acid sequence (protein folding), predicting mutational stability, and determining binding sites and binding affinity in biomolecular interactions. However, the exploration of protein dynamics in molecular dynamics (MD) simulations using machine learning techniques has not been as extensively pursued.

\subsubsection{Advances in Static Protein Structure Prediction}

A significant breakthrough in the field of static protein structure prediction has been achieved with the development of DeepMind's AlphaFold model and related models \cite{jumper2021highly,baek2021accurate,mirdita2022colabfold,abramson2024accurate}. These tools have shown exceptional effectiveness in determining the three-dimensional configurations of proteins based solely on their amino acid sequences. However, these tools primarily produce models of proteins in their equilibrium state, providing snapshots of their structures under static conditions rather than capturing dynamic fluctuations and conformational changes. Nevertheless, understanding these dynamic aspects is crucial for comprehending how proteins function in more complex biological systems and how they respond to different environmental stimuli. 

\subsubsection{Traditional Analytical Techniques in Protein Dynamics}

Traditionally, the study of dynamic changes in protein structures has heavily relied on the analysis of molecular dynamics (MD) trajectories. Key analytical methods include examining dihedral and torsion angles, which are critical for understanding the rotation around bonds within the protein backbone and side chains. Many analyses rely on root mean square fluctuations (RMSF), which measures the deviation of parts of the protein structure from their average positions over time, providing insights into the flexibility and dynamic regions of the protein. Ramachandran plots \cite{1963stereochemistry} are also widely used, offering a visual and quantitative assessment of the sterically allowed and forbidden conformations of dihedral angles in protein structures. These plots encode the conformational landscape of proteins, by showing the frequency of occurrence of dihedral angle pairs in various conformations. These techniques primarily focus on local conformational changes and often do not adequately capture the global interactions within and between different motifs and domains of the protein. Additionally, they fail to capture transient interactions and rapid conformational shifts that are essential for understanding the protein's full biological functionality.

\subsubsection{Traditional Analytical Techniques in Protein Dynamics}

Recent efforts in machine learning have shifted towards leveraging such features extracted from MD trajectories to enhance our understanding of protein dynamics. The MDTraj (\citet{McGibbon2015MDTraj}) suite, for instance, offers a comprehensive toolkit for analyzing these trajectories through various computed features, such as radius of gyration, solvent-accessible surface area, torsion angles, etc. In the realm of deep learning, methods such as dynamic tensor analysis (\citet{sun2006beyond}) have been explored to discern patterns in protein dynamics more effectively. \citet{ramanathan_--fly_2011} have pioneered work in this area by representing protein trajectories as tensors to identify conformational substates and differentiate between stable and dynamic phases within MD simulations. 

In this work, we take a representation learning approach to understanding the dynamics of conformational changes in proteins, by learning a latent representation that can be decoded back to the protein structure, and is  temporally organized to enable interpolation between protein conformations. Our approach most closely resembles that of \citet{ramaswamy_deep_2021}, who implement a CNN autoencoder, with  a physics-informed loss function to ensure physically-plausible intermediate transition paths between  equilibrium states. However, their model requires a relatively computationally-expensive feature set of physics-based electrostatics and potentials, and they also found that in inference, it may yet miss finer domain-level conformational changes and produce suboptimal loop regions where the protein is more flexible (\citet{ramaswamy_deep_2021}, p. 8).

\subsection{Theoretical Results}
\label{app:theory}

\theoremstyle{plain}
\newtheorem{theorem}{Theorem}[subsubsection]
\newtheorem{proposition}[theorem]{Proposition}
\newtheorem{lemma}[theorem]{Lemma}
\newtheorem{corollary}[theorem]{Corollary}
\theoremstyle{definition}
\newtheorem{definition}[theorem]{Definition}
\newtheorem{assumption}[theorem]{Assumption}
\theoremstyle{remark}
\newtheorem{remark}[theorem]{Remark}
\newtheorem{example}[theorem]{Example}

\renewcommand{\thetheorem}{\Alph{subsection}.\arabic{theorem}}
\renewcommand{\thelemma}{\Alph{subsection}.\arabic{lemma}}



In this section, we will state two theorems which, when taken together, show that the generalized geometric scattering transform, a key ingredient in our architecture, is stable to small perturbations, i.e., it will represent molecules in similar ways if the molecules have the similar structure. For proofs, please see Appendix \ref{sec: proofs}.

In addition to the theorems stated below, we also note that 
Theorem 2 of \cite{tong2022learnable} shows that the generalized graph scattering coefficients $\widetilde{\mathcal{U}}$ are permutation equivariant, i.e., if you reorder the vertices, you reorder each scattering coefficient in the same manner. Therefore, this result, together with our theorems below shows that the generalized  geometric scattering transform captures the intrinsic structure of the data rather than superficial artifacts such as the ordering of the vertices and is stable to small perturbations of the graph.

Let $G'=(V',E')$ denote a graph which is interpreted as a perturbed version of $G=(V,E)$ and satisfies the same assumptions as $G,$ e.g., that is connected. For any object $X$ associated to $G$, we will let $X'$ represent the analogous object on $G'$, for instance, $A'$ and $D'$ will denote the adjacency and degree matrices of $G'$.
In order to quantify the differences between $G$ and $G'$ we will introduce several quantities. First, we define 
\begin{equation}
\label{eqn: kappa}
\kappa\coloneqq\max\{\|I-D^{-1/2}(D')^{1/2}\|_2,\|I-(D')^{-1/2}D^{1/2}\|_2\},
\end{equation}
where here and throughout $\|M\|$ denotes the $\ell^2$ operator norm of a matrix $M$, i.e., $\|M\|=\sup_{\|\mathbf{x}\|_2=1}\|M\mathbf{x}\|_2$. Notably, if $D=D'$, e.g., if both $G$ and $G'$ are $k$-regular graphs, we will have $\kappa=0$. More generally, $\kappa$ will be small if $G$ and $G'$ have similar degree vectors. Next, we let
\begin{equation}\label{eqn: R}
R\coloneqq \max\{\|D^{-1/2}(D')^{1/2}\|,\|(D')^{-1/2}D^{1/2}\|\},
\end{equation}
and observe that if $G$ and $G'$ have similar degree vectors we have $R\approx 1$. Lastly, following the lead of \citet{perlmutter2023understanding} and \citet{gama2019diffusion} we consider the diffusion distance  
$
\|P-P'\|_{D^{-1/2}},
$
where $\|M\|_{D^{-1/2}}=\sup_{\|\mathbf{x}\|_{D^{-1/2}}=1}\|M\mathbf{x}\|_{D^{-1/2}}$ denotes the operator norm of a matrix $M$ on the weighted $\ell^2$ space 
$
\|\mathbf{x}\|_{D^{-1/2}}=\|D^{-1/2}\mathbf{x}\|_2=\sum_{v\in V}\frac{|\mathbf{x}(v)|^2}{\mathbf{d}(v)}.
$
Our first theorem,  provides stability of the generalized wavelets $\tilde{\mathcal{W}}_J$ by bounding
\begin{equation*}
\|\tilde{\mathcal{W}}_J-\tilde{\mathcal{W}}'_J\|_{D^{-1/2}}\coloneqq \left(\sum_{j=0}^J\|\widetilde{\Psi_j}\mathbf{x}-\widetilde{\Psi_j}'\mathbf{x}\|_{D^{-1/2}}^2\right)^{1/2}.
\end{equation*}

It builds off of the results obtained in Theorems 4.2 and 4.3 in \citet{perlmutter2023understanding} which considered the dyadic wavelets $\Psi_j$ (see also Proposition 4.1 of \citet{gama2019diffusion}). The constant in our bound will depend on the spectra of $P$ and $P'$. It is known that the largest eigenvalue of $P$ is given by $\lambda_1=1$ and since $G$ is connected,  the second largest eigenvalue satisfies $1=\lambda_1>\lambda_2>0$. This second eigenvalue can be interpreted as measuring how strongly connected the graph is. If the spectral gap $1-\lambda_2$ is small, then the graph is thought of as being nearly disconnected, whereas if the spectral graph is large, the graph is thought of as being more tightly connected. 

\begin{theorem}
\label{thm: wavelet stability}
The wavelet transform is stable to small graph perturbations. More specifically,
let $G'=(V',E')$ be a perturbed version of $G$ with $|V|=|V'|=n$.
Then \begin{align*}
\|\tilde{\mathcal{W}}_J-\tilde{\mathcal{W}}'_J\|_{D^{-1/2}}^2\leq C\left(\kappa(1+R^3)+R\|P-P'\|_{D^{-1/2}} + \kappa^2(\kappa+1)^2\right),
\end{align*}
where 
$C$ is a constant depending only on the spectral gaps of the graphs $G$ and $G'$ and in particular does not depend on $J$ or on the choice of scales $t_j$. 
\end{theorem}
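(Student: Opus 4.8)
The plan is to pass to symmetric operators, isolate the part of the perturbation caused by the differing degree matrices, and reduce the remaining spectral part to the dyadic estimates of \citet{perlmutter2023understanding}, Theorems~4.2--4.3, whose argument (I would verify) uses nothing about the scales beyond their monotonicity. First I would set $T\coloneqq\tfrac12(I+D^{-1/2}AD^{-1/2})=D^{-1/2}PD^{1/2}$, which is symmetric positive semidefinite, has the same spectrum $1=\lambda_1>\lambda_2>\dots\geq 0$ as $P$, and satisfies $\|M\|_{D^{-1/2}}=\|D^{-1/2}MD^{1/2}\|_2$ for every $M$. Writing $B\coloneqq D^{-1/2}(D')^{1/2}$ (a diagonal matrix with $\|I-B\|_2,\|I-B^{-1}\|_2\leq\kappa$ and $\|B\|_2,\|B^{-1}\|_2\leq R$), the symmetrized wavelets are $\widetilde{\Phi}_j\coloneqq T^{t_j}-T^{t_{j+1}}$ for $j\geq 1$ and $\widetilde{\Phi}_0\coloneqq I-T^{t_1}$, with $D^{-1/2}\widetilde{\Psi}'_jD^{1/2}=B\widetilde{\Phi}'_jB^{-1}$; so it suffices to bound $\sup_{\|\mathbf x\|_2=1}\sum_{j=0}^J\|(\widetilde{\Phi}_j-B\widetilde{\Phi}'_jB^{-1})\mathbf x\|_2^2$, and I would split $\widetilde{\Phi}_j-B\widetilde{\Phi}'_jB^{-1}=(\widetilde{\Phi}_j-\widetilde{\Phi}'_j)+(I-B)\widetilde{\Phi}'_j+B\widetilde{\Phi}'_j(I-B^{-1})$.

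For the two conjugation terms I would use a Littlewood--Paley bound for the generalized wavelets: since $\widetilde{\Phi}'_j=g_j(T')$ with $g_j\geq 0$ on $[0,1]$ and $\sum_j g_j(\mu)=1-\mu^{t_{J+1}}\leq 1$, one gets $\sum_j g_j(\mu)^2\leq 1$, hence $\sum_j\|\widetilde{\Phi}'_j\mathbf z\|_2^2\leq\|\mathbf z\|_2^2$ for all $\mathbf z$; applied with $\mathbf z=\mathbf x$ and $\mathbf z=(I-B^{-1})\mathbf x$ this contributes $O(\kappa^2)$ and $O(R^2\kappa^2)$, independently of $J$ and of the scales. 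For the spectral term $\sum_j\|(\widetilde{\Phi}_j-\widetilde{\Phi}'_j)\mathbf x\|_2^2$ I would first peel off the stationary parts: each wavelet kills its stationary vector, $\widetilde{\Phi}_j=(I-Q)\widetilde{\Phi}_j$, $\widetilde{\Phi}'_j=(I-Q')\widetilde{\Phi}'_j$ with $Q=\mathbf v_1\mathbf v_1^\top$, $\mathbf v_1=\mathbf d^{1/2}/\|\mathbf d^{1/2}\|_2$, so $\widetilde{\Phi}_j-\widetilde{\Phi}'_j=(I-Q)(\widetilde{\Phi}_j-\widetilde{\Phi}'_j)+(Q'-Q)\widetilde{\Phi}'_j$, and the last term is $O(\|Q-Q'\|^2)=O(\kappa^2)$ by Littlewood--Paley again. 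For the leading piece I would telescope the wavelet over exponents, $T^{t_j}-T^{t_{j+1}}=\sum_{m=t_j}^{t_{j+1}-1}T^m(I-T)$ (and likewise for $T'$, with the convention $t_0:=0$), split each summand as $T^m(T'-T)+(T^m-(T')^m)(I-T')$, then telescope the inner power difference $T^m-(T')^m=\sum_{l<m}T^l(T-T')(T')^{m-1-l}$; using the identities $(I-Q)T^l=T^l-Q$ with $\|T^l-Q\|_2\leq\lambda_2^l$ and $Q'(I-T')=0$ (so $(T')^l(I-T')=((T')^l-Q')(I-T')$, of norm $\leq(\lambda'_2)^l$), every surviving summand acquires one factor of $T-T'$ together with geometric weights $\lambda_2^a(\lambda'_2)^b$, $\rho:=\max\{\lambda_2,\lambda'_2\}$. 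Summing the double series over the exponents then leaves only $\sum_j\lambda_2^{2t_j}\leq\sum_{m\geq 0}\lambda_2^{2m}=(1-\lambda_2^2)^{-1}$ and the weighted variant $\sum_j t_j^2\rho^{2t_j}<\infty$ (finite because the $t_j$ are distinct positive integers, hence $t_j\geq j$), and this is the only place the graphs enter, which is exactly why the constant depends on nothing but the spectral gaps; translating $\|T-T'\|_2\leq\|P-P'\|_{D^{-1/2}}+\kappa(1+R)$ back and collecting all pieces (replacing squares by linear terms via $\|P-P'\|_{D^{-1/2}}\leq 1+R^2$ and $\kappa\leq$ const) would give the stated bound.

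The step I expect to be the main obstacle is the spectral term, and specifically producing a single estimate that is at once (i) linear in the size of the perturbation, so that it vanishes as $G'\to G$, and (ii) independent of the number of scales $J$ and of the scales $t_j$. The naive Lipschitz bound $\|T^t-(T')^t\|_2\leq t\|T-T'\|_2$ degrades with $t$, whereas the naive spectral-gap bound $\|\widetilde{\Phi}_j\|_2\leq 2\lambda_2^{t_j}$ is $J$-free but $O(1)$ rather than perturbation-small, so neither alone suffices; the two must be interleaved by the double telescoping above, deciding at each step which factors to absorb into a geometrically convergent series in the exponents and which one to keep as the single perturbation factor $T-T'$. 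The delicate bookkeeping is ensuring that the conjugation factor $B$ and the mismatch $Q\neq Q'$ never generate a term scaling with a partial sum of gaps $t_{j+1}-t_j$, which would reintroduce dependence on $J$; this is essentially the content of \citet{perlmutter2023understanding}, Theorems~4.2--4.3, and the only genuinely new point is to check that their argument goes through for arbitrary increasing scales, since it uses only monotonicity of the $t_j$ and the telescoping identity $\sum_j\widetilde{\Phi}_j=I-T^{t_{J+1}}$.
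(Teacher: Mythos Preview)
Your proposal is correct and follows essentially the same skeleton as the paper: pass to the symmetric operator $T=D^{-1/2}PD^{1/2}$, separate the degree-change contribution (your conjugation by $B=D^{-1/2}(D')^{1/2}$) from the spectral contribution, remove the stationary projection, bound the remaining power differences by a single factor of $\|T-T'\|$ times a geometric series in $\lambda_2^*=\max\{\lambda_2,\lambda_2'\}$, and translate back to $\|P-P'\|_{D^{-1/2}}$.

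Two places where the paper is more streamlined than your route, worth noting. First, rather than your three-term split $(\widetilde\Phi_j-\widetilde\Phi_j')+(I-B)\widetilde\Phi_j'+B\widetilde\Phi_j'(I-B^{-1})$ together with a Littlewood--Paley bound, the paper simply quotes Theorem~4.3 of \citet{perlmutter2023understanding} as a black box, which already gives $\|\tilde{\mathcal W}_J-\tilde{\mathcal W}'_J\|_{D^{-1/2}}^2\leq 6\bigl(\|\tilde{\mathcal W}_J^{(T)}-(\tilde{\mathcal W}_J^{(T)})'\|_2^2+\kappa^2(\kappa+1)^2\bigr)$; your hands-on derivation is essentially a re-proof of that result. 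Second, for the spectral term the paper does not telescope twice. It works directly with $\bar T=T-\mathbf v\mathbf v^\top$ (so $\|\bar T\|\leq\lambda_2^*$) and uses the homotopy $H_j(t)=(t\bar T+(1-t)\bar T')^{t_j}$ to get $\|\bar T^{t_j}-(\bar T')^{t_j}\|\leq t_j(\lambda_2^*)^{t_j-1}\|\bar T-\bar T'\|$ in one line, then sums $\sum_j t_j^2(\lambda_2^*)^{2t_j-2}\leq\sum_{k\geq1}k^2(\lambda_2^*)^{k}<\infty$ exactly as you do. Your double telescoping arrives at the same estimate but with more bookkeeping. Finally, the paper controls the projection mismatch via a Davis--Kahan-type bound $\|\mathbf v-\mathbf v'\|_2^2\leq C_{\lambda_2^*}\|T-T'\|$ (Lemma~SM10.2 of \citet{perlmutter2023understanding}), which directly produces the \emph{linear} dependence on $\|T-T'\|$ in the final estimate; your route gets there by squaring and then absorbing one factor via $\|T-T'\|\leq 2$, which is equally valid but slightly less sharp.
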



To understand this result, we note that if $G$ is close to $G'$ in the sense that $A'\approx A$, we will have $\kappa\approx 0$ and $R\approx 1.$ Therefore, we will have 
$\|\tilde{\mathcal{W}}_J-\tilde{\mathcal{W}}'_J\|_{D^{-1/2}}^2\lesssim C\|P-P'\|_{D^{-1/2}}$. 

Our next theorem builds on Theorem \ref{thm: wavelet stability} to show that the $\ell$-th order scattering coefficients are also stable to the minor perturbation of the graph structure. Notably, our stability bound in Theorem \ref{thm: wavelet stability}, and therefore \ref{thm: stability}, does not depend n the number of wavelet scales used, $J$, or the choice of scales $t_j$. This fact, which is in part a consequence of the wavelets being designed to capture complementary information stands in contrast to analogous results for generic (non-wavelet based) GNNs (see, e.g., Theorem 4 of \cite{gama2020stability}) where the stability bounds rapidly increase with the size of the filter bank.


\begin{theorem}\label{thm: stability}
The generalized geometric scattering $\tilde{U}$ transform is stable to small deformations of the graph structure. In particular, for all $\mathbf{x}\in\mathbb{R}^n$, and $R$ as in \eqref{eqn: R}, we have 
\begin{equation*}
\sum_{0\leq j_1,\ldots,j_\ell\leq J}\|\widetilde{U}[j_1,\ldots,j_\ell]\mathbf{x}-\widetilde{U}'[j_1,\ldots,j_\ell]\mathbf{x}\|^2_{D^{-1/2}}\leq \|\tilde{\mathcal{W}}_J-\tilde{\mathcal{W}}'_J\|_{D^{-1/2}}^2\left(\sum_{k=0}^{\ell-1}R^{2k}\right)^2\|\mathbf{x}\|^2_{D^{-1/2}}.
\end{equation*}
\end{theorem}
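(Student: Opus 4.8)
The plan is to induct on the scattering order $\ell$, peeling off the outermost wavelet--modulus layer, mirroring standard layerwise stability arguments for scattering transforms. Write $a_\ell$ for the square root of the left‑hand side, with the convention $a_0=0$. The target is the one‑step estimate $a_\ell \le \|\tilde{\mathcal{W}}_J-\tilde{\mathcal{W}}'_J\|_{D^{-1/2}}\,\|\mathbf{x}\|_{D^{-1/2}} + R^2 a_{\ell-1}$; unrolling this recursion gives $a_\ell \le \|\tilde{\mathcal{W}}_J-\tilde{\mathcal{W}}'_J\|_{D^{-1/2}}\,\|\mathbf{x}\|_{D^{-1/2}}\sum_{k=0}^{\ell-1}R^{2k}$, and squaring yields exactly the stated inequality.

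Three ingredients feed the recursion. (i) The modulus $M$ acts entrywise, hence is norm‑preserving, $\|M\mathbf{z}\|_{D^{-1/2}}=\|\mathbf{z}\|_{D^{-1/2}}$, and $1$‑Lipschitz on differences, $\|M\mathbf{z}_1-M\mathbf{z}_2\|_{D^{-1/2}}\le\|\mathbf{z}_1-\mathbf{z}_2\|_{D^{-1/2}}$, since $\bigl||a|-|b|\bigr|\le|a-b|$ and the weights $1/\mathbf{d}(v)$ are positive. (ii) $P$ is self‑adjoint on $\ell^2(D^{-1})$ (reversibility of the lazy walk) with spectrum in $[0,1]$, so each $\widetilde{\Psi}_j$ is the spectral multiplier $\psi_j$ with $0\le\psi_j\le 1$ and $\sum_j\psi_j(\lambda)=1-\lambda^{t_{J+1}}\le 1$; non‑negativity gives $\sum_j\psi_j(\lambda)^2\le\bigl(\sum_j\psi_j(\lambda)\bigr)^2\le 1$, hence the Bessel bound $\sum_{j=0}^J\|\widetilde{\Psi}_j\mathbf{z}\|_{D^{-1/2}}^2\le\|\mathbf{z}\|_{D^{-1/2}}^2$, which combined with norm‑preservation of $M$ and a short induction on the number of layers yields the contraction property $\sum_{j_1,\dots,j_m}\|\widetilde{U}[j_1,\dots,j_m]\mathbf{x}\|_{D^{-1/2}}^2\le\|\mathbf{x}\|_{D^{-1/2}}^2$; the identical facts hold on $G'$ in the $(D')^{-1/2}$ norm. (iii) By \eqref{eqn: R}, the two weighted norms are comparable: $\|\mathbf{w}\|_{D^{-1/2}}\le R\,\|\mathbf{w}\|_{(D')^{-1/2}}$ and $\|\mathbf{w}\|_{(D')^{-1/2}}\le R\,\|\mathbf{w}\|_{D^{-1/2}}$.

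For the inductive step, fix $\mathbf{j}'=(j_1,\dots,j_{\ell-1})$, set $\mathbf{y}=\widetilde{U}[\mathbf{j}']\mathbf{x}$ and $\mathbf{y}'=\widetilde{U}'[\mathbf{j}']\mathbf{x}$, and split
\[
\widetilde{U}[\mathbf{j}',j_\ell]\mathbf{x}-\widetilde{U}'[\mathbf{j}',j_\ell]\mathbf{x}=\bigl(M\widetilde{\Psi}_{j_\ell}\mathbf{y}-M\widetilde{\Psi}'_{j_\ell}\mathbf{y}\bigr)+\bigl(M\widetilde{\Psi}'_{j_\ell}\mathbf{y}-M\widetilde{\Psi}'_{j_\ell}\mathbf{y}'\bigr).
\]
Regarding the family indexed by $(\mathbf{j}',j_\ell)$ as a vector in a direct‑sum Hilbert space and applying the triangle (Minkowski) inequality, $a_\ell$ is at most the $\ell^2$‑norm of the first piece plus that of the second. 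For the first piece, $1$‑Lipschitzness of $M$ bounds each term by $\|(\widetilde{\Psi}_{j_\ell}-\widetilde{\Psi}'_{j_\ell})\mathbf{y}\|_{D^{-1/2}}$; summing over $j_\ell$ invokes Theorem~\ref{thm: wavelet stability}, read as an operator‑norm estimate applied to the intermediate coefficient $\mathbf{y}$, to give $\|\tilde{\mathcal{W}}_J-\tilde{\mathcal{W}}'_J\|_{D^{-1/2}}\|\mathbf{y}\|_{D^{-1/2}}$, and summing over $\mathbf{j}'$ with ingredient (ii) on $G$ produces $\|\tilde{\mathcal{W}}_J-\tilde{\mathcal{W}}'_J\|_{D^{-1/2}}\|\mathbf{x}\|_{D^{-1/2}}$. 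For the second piece, $1$‑Lipschitzness of $M$ bounds each term by $\|\widetilde{\Psi}'_{j_\ell}(\mathbf{y}-\mathbf{y}')\|_{D^{-1/2}}$; since the perturbed wavelet is only a contraction in the $(D')^{-1/2}$ norm, summing over $j_\ell$ and converting norms via ingredient (iii) on both sides gives $\sum_{j_\ell}\|\widetilde{\Psi}'_{j_\ell}(\mathbf{y}-\mathbf{y}')\|_{D^{-1/2}}^2\le R^4\|\mathbf{y}-\mathbf{y}'\|_{D^{-1/2}}^2$, and summing over $\mathbf{j}'$ gives $R^2 a_{\ell-1}$. This establishes the recursion and completes the argument.

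The only delicate point is the norm bookkeeping in the second piece: the perturbed wavelets are adapted to the $(D')^{-1/2}$ geometry, so transporting their Bessel bound into the ambient $D^{-1/2}$ norm is exactly what injects the factor $R^2$ at each layer and converts a naive $O(\ell)$ constant into the geometric sum $\sum_{k=0}^{\ell-1}R^{2k}$. The spectral computation behind the Bessel bound, the entrywise modulus estimates, and the unrolling of the recursion are all routine.
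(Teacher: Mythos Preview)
Your proposal is correct and follows essentially the same route as the paper: the same layerwise induction with the identical splitting of the outermost layer into a wavelet-difference term controlled by $\|\tilde{\mathcal{W}}_J-\tilde{\mathcal{W}}'_J\|_{D^{-1/2}}$ (via the contraction property, the paper's Lemma~\ref{lem: nonexpansive iterated}) and a perturbed-wavelet term contributing the factor $R^2$ per layer. The only cosmetic difference is that the paper obtains the bound $\mathcal{C}\le R^2$ for the perturbed wavelets in the $D^{-1/2}$ norm by citing Proposition~4.10 of \citet{perlmutter2023understanding}, whereas you derive it directly by sandwiching the $(D')^{-1/2}$ Bessel bound between two applications of the norm comparison in ingredient~(iii).
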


\subsection{Learnable Node Embeddings}
\label{sec:nodes}

In most of our experiments, we will generally take our input signals $\mathbf{x}$ to be one-hot encodings of the amino acids. However, in cases where the protein has a large number of residues which corresponds to large graph size, the use of a one-hot encoding of the amino acids as node features results in a significant increase in model parameters. This leads to vast memory requirements for model execution. Hence, we consider the use of an autoencoder to generate learnable 3-D node embeddings in order to optimize for memory. The encoder and decoders are both defined to be multilayer perceptrons that consist of 2 linear layers with a ReLU activation in between them. The encoder takes as input the signals $\mathbf{x}$ constructed through a one-hot encoding of the amino acids and produces a 3-D embedding of the residues.  
This 3-D embedding is then utilized as node features of the graphs for the model. The decoder then reconstructs the one-hot encoding of the amino acids back from the 3-D embedding. 
In these cases, we also utilize  a reconstruction loss $\mathcal{L}_{n} = \frac{1}{n}\|\hat{\mathbf{x}} - \mathbf{x}\|^2$ where $\hat{\mathbf{x}}$ is the reconstructed one-hot encoding of the amino acids. 
Our total loss $\mathcal{L}$ is then defined by
\begin{equation*}
    \mathcal{L} = \alpha\mathcal{L}_{\text{t}} + \beta\mathcal{L}_{\text{s}} + \gamma\mathcal{L}_{\text{c}} + (1-\alpha-\beta-\gamma)\mathcal{L}_{n}
\end{equation*}
where $\alpha$, $\beta$, and $\gamma$ are tunable hyperparameters. 

\subsection{Interpolation of pairwise distances at withheld time points}
\label{app:interp_dist}
We obtained short MD simulations of three proteins (Hirustasin, 50S Ribosomal Protein L30, Protein Kinase C Delta) from the ATLAS open online repository \citep{vander2024atlas} consisting of 1000 frames sampled at a rate of approximately one frame per 100 picoseconds.

We visualize the latent representations obtained from the encoder using the PHATE \citep{moon2019visualizing} dimensionality reduction algorithm. As shown in Figure \ref{fig:latents}, the latent representations are organized by time, with different regions of the latent space corresponding to distinct protein conformations. Furthermore, in Figure \ref{fig:latents}, we visualize the attention scores learned by the transformer using PyMOL \cite{PyMOL}. Residues with high attention scores, colored in green, indicate more significant contributions to the latent representation obtained by \modelname{}. We note that these residues correspond to more flexible regions of the protein that play a pivotal role in conformational changes.

Additionally, as shown in Table \ref{tab:interpolation}, \modelname{} outperforms other graph-based methods (GNNs) in predicting pairwise distances between the center of mass of the residues as well as pairwise residue dihedral angle differences indicating that \modelname{} excels in capturing the spatial and angular relationships in protein conformations. We note that the full version of \modelname{} does perform \emph{slightly} worse than the versions without the attention or without the time-regression. However, these differences are small and the addition of the time-regression leads to a more coherently structured latent space while the attention mechanism makes the model more interpretable.

\begin{table}[H]
\renewcommand{\arraystretch}{1.2}
\small
\caption{MAE (mean $\pm$ std dev.) of decoded structures compared to ground truth at held-out timepoints. (Lower is better.) \underline{\textbf{Best}} results are bold and underlined. \underline{Second best} results are underlined.}
\centering
\resizebox{\textwidth}{!}{\begin{tabular}{cccccccc}
\toprule
\thead{Model} & \thead{Model} & \multicolumn{3}{c}{\thead{Pairwise Residue COM Distance}} & \multicolumn{3}{c}{\thead{Pairwise Residue Dihedral Angle Difference}} \\
\thead{Family} & & 1ptq & 1bxy & 1bx7 & 1ptq & 1bxy & 1bx7 \\
\midrule
\multirow{6}{*}{\STAB{\rotatebox[origin=c]{90}{\textbf{GNNs}}}}    & GIN            & 0.6017 ± 0.0071 & 0.6300 ± 0.0014 & 0.8094 ± 0.0210 & 87.7973 ± 0.0220 & 1.3894 ± 0.0027 & 1.4380 ± 0.0086 \\
                                                                    & GAT           & 0.6033 ± 0.0076 & 0.4984 ± 0.0022 & 0.6856 ± 0.0189 & 1.3894 ± 0.0080 & 26.7543 ± 0.0143 & 1.4467 ± 0.0183 \\
                                                                    & GCN           & 4.0285 ± 0.0160 & 5.9410 ± 0.0133 & 1.4184 ± 0.0067 & 1.4184 ± 0.0067 & 1.6042 ± 0.0043 & 3.1116 ± 0.0445 \\
                                                                    & GraphSAGE     & 0.4686 ± 0.0055 & 0.4956 ± 0.0019 & 0.6828 ± 0.0196 & 1.3906 ± 0.0082 & 1.8634 ± 0.0100 & 1.4318 ± 0.0156 \\
                                                                    & $E(n)$-EGNN   & 0.5345 ± 0.0018 & 0.5239 ± 0.0014 & 0.8144 ± 0.0330 & 1.3955 ± 0.0090 & 1.3837 ± 0.0024 & 1.4679 ± 0.0099 \\
\hline
\multirow{3}{*}{\STAB{\rotatebox[origin=c]{90}{\textbf{Ours}}}} & w/o time regression   & \underline{0.0715 ± 0.0120} & \underline{0.0545 ± 0.0079} & \underline{0.1134 ± 0.0289} & \underline{\textbf{0.4425 ± 0.0423}} & \underline{\textbf{0.2683 ± 0.0247}} & \underline{0.6061 ± 0.0731} \\
                                                                & w/o attention         &\underline{\textbf{0.0711 ± 0.0098}} & \underline{\textbf{0.0531 ± 0.0068}} & \underline{\textbf{0.1058 ± 0.0230}} & \underline{0.4440 ± 0.0450} & \underline{0.2724 ± 0.0356} & \underline{\textbf{0.6050 ± 0.0675}} \\
                                                                & \modelname{}          & 0.0735 ± 0.0125 & 0.1248 ± 0.0251 & {0.1184 ± 0.0285} & {0.4548 ± 0.0481} & {0.3055 ± 0.0424} & {0.6155 ± 0.0644} \\
\bottomrule
\end{tabular}}
\label{tab:interpolation}
\end{table}

\begin{figure}[H]
\centering
\includegraphics[width=0.98\linewidth]{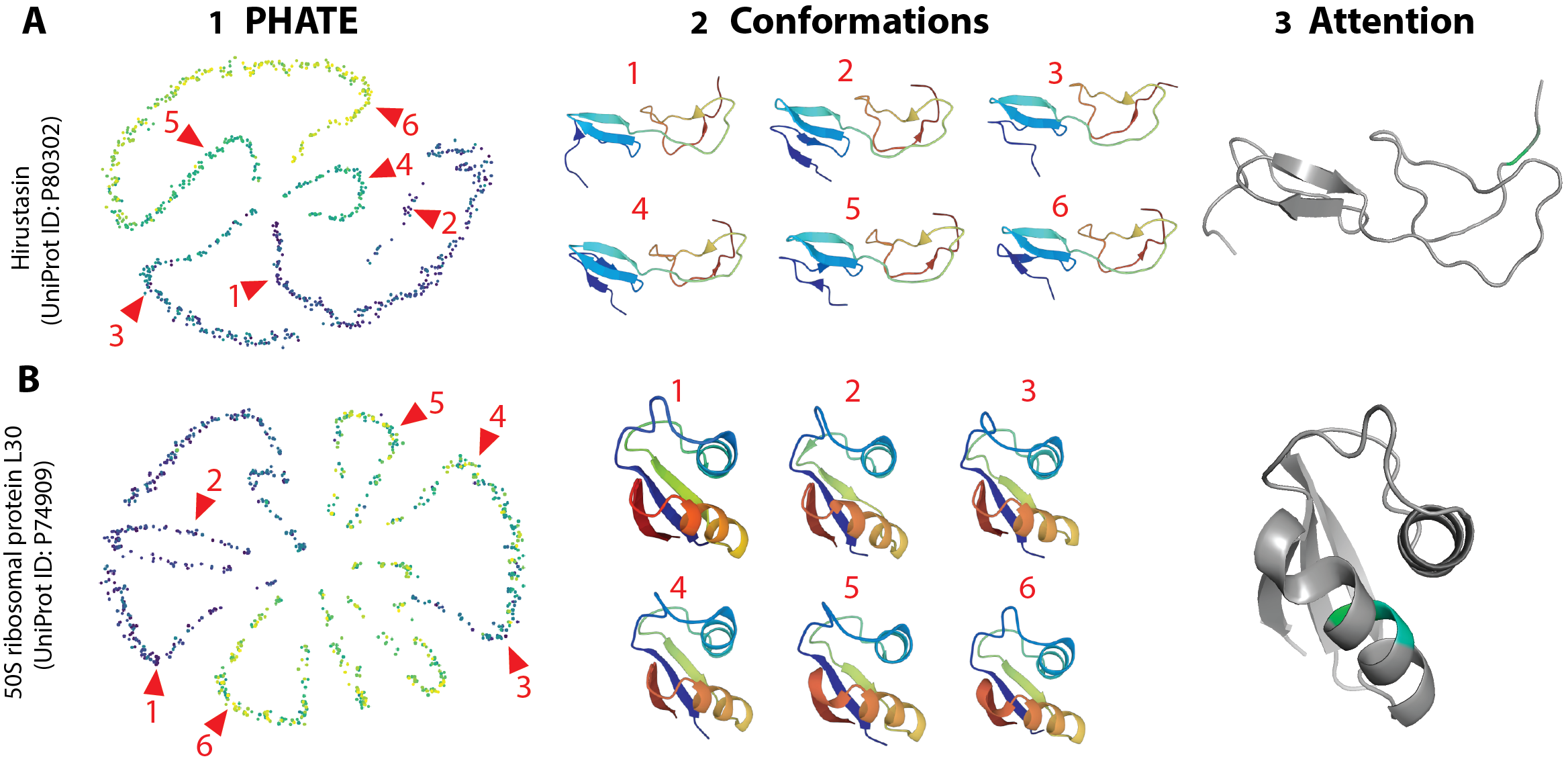}
\caption{Representation of the latent space (1), conformation samples(2) and Attention read-outs(3) for two different proteins simulations: Hirustasin(A) and 50S Ribosomal protein L30 (B). As we can see the attention is higher on parts of the protein that are flexible (i.e. change over different time points)}
\label{fig:latents}
\end{figure}

\subsection{Interpolation of residue coordinates at withheld time points}
\label{app:interp_coords}
In additional to training the equivariant \modelname{} architecture that reconstructs pairwise distances and angles between residues, we also trained a variant of the model that directly reconstructs residue center of mass coordinates from the latent representation. In this section, we report the performance of this model using four metrics, Spearman Correlation Coefficient (SCC; higher is better), Pearson Correlation Coefficient (PCC; higher is better), Root Mean Square Distance (RMSD; lower is better), and Discrete Optimized Protein Energy (DOPE; lower is better). To evaluate using SCC and PCC, we compared the ground truth pairwise Euclidean distances between residue centers with the distances between the reconstructed residue coordinates at withheld time points. RMSD and DOPE were calculated from atomic coordinates by placing amino acids at the reconstructed residue coordinates obtained from \modelname{} and comparing against the withheld ground truth structure. DOPE scores were calculated using the MODELLER package.

We compared the performance of \modelname{} against GNNs and deep learning models trained using features derived from MDTraj \cite{McGibbon2015MDTraj} and Ramachandran plots using these metrics. As shown in Table \ref{tab:recon}, ProtSCAPE is overall the top performing model with best, or second best, performance across all five proteins of interest. MD trajectories of these proteins were obtained from the ATLAS \cite{vander2024atlas} and D. E. Shaw Research datasets (GB3 and BPTI). A recurrent neural network trained on MDTraj-derived features, denoted MDTraj-RNN, is overall the second best performing method with regards to SCC and PCC. A variational autoencoder trained on MDTraj-derived features, denoted MDTraj-VAE, is overall the second best performing method with respect to RMSD. However, it is the worst performing method with respect to SCC and PCC. In comparison, the convolutional model trained on Ramachandran features generally does not perform well on any of the methods or datasets.

To visualize the reconstructed protein structure at withheld timepoints, we obtained atomic coordinates by placing amino acids at the predicted center of mass of the residues. This generated an imperfect atomic-scale reconstruction, since the precise orientation of the residues is not predicted. Consequently, when the reconstructed protein is visualized using PyMOL (Figure \ref{fig:coord_recon}), the alpha folds and beta sheets are not rendered, even through the location of the residues matches the ground-truth. This artifact can be fixed in the future versions by predicting the orientation of the residue in additional to its center of mass through the decoder.

\begin{figure}[H]
    \centering
    \includegraphics[width=0.92\linewidth]{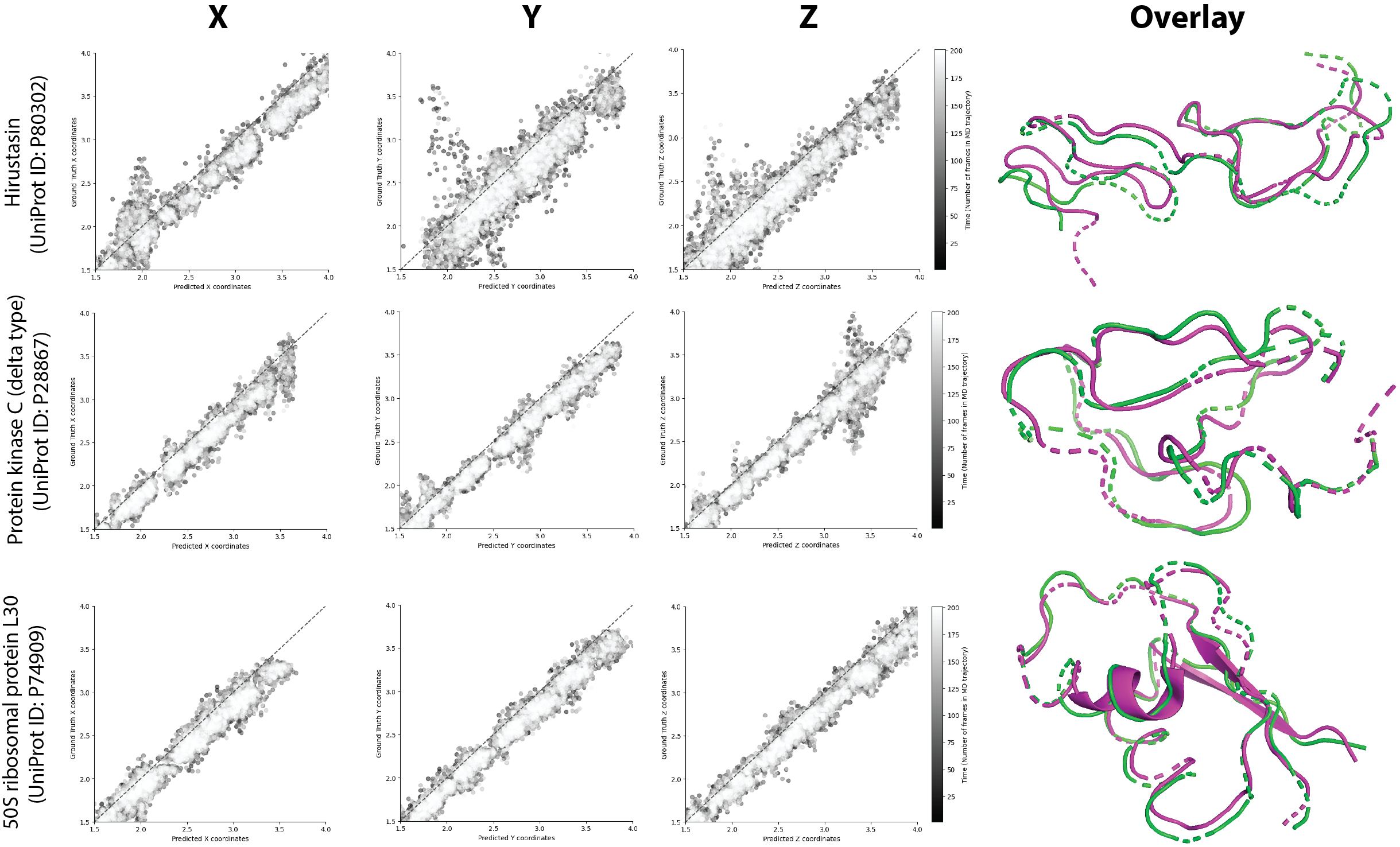}
    \caption{Plots of ground truth vs. reconstructed residue coordinates at withheld time points. The overlay shows good agreement between the ground truth structure (colored in magenta) and the \modelname{} reconstructed structure (colored in green) obtained by decoding the latent representations of the protein structure at withheld timepoints.}
    \label{fig:coord_recon}
\end{figure}

\begin{table}[H]
    \small
    \caption{Reconstruction metrics (mean $\pm$ std dev. computed using five-fold cross-validation). \\\underline{\textbf{Best}} results are bold and underlined. \underline{Second best} results are underlined.}
    \centering
    \resizebox{0.87\textwidth}{!}{\begin{tabular}{cccccc}
    \toprule
    \textbf{Model} & \textbf{Protein} & \textbf{SCC ($\uparrow$)} & \textbf{PCC ($\uparrow$)} & \textbf{RMSD ($\downarrow$)} & \textbf{DOPE ($\downarrow$)} \\ \\
    \hline
    \multirow{6}{*}{MDTraj-VAE} & Hirustasin        & $0.28 \pm 0.08$ & $0.28 \pm 0.08$ & $0.40 \pm 0.01$ & $\;\,\underline{\mathbf{862.5 \pm 1.32}}$ \\
                                & 50S Ribosomal     & $0.42 \pm 0.09$ & $0.44 \pm 0.05$ & $\underline{0.15 \pm 0.00}$ & $\underline{1243.4 \pm 0.24}$ \\
                                & Protein Kinase C  & $0.41 \pm 0.07$ & $0.36 \pm 0.01$ & $\underline{0.25 \pm 0.00}$ & $\;\,\underline{\mathbf{1078.8 \pm 0.16}}$ \\
                                & GB3               & $0.30 \pm 0.15$ & $0.22 \pm 0.16$ & $\underline{\mathbf{0.38 \pm 0.00}}$ & $\underline{1117.3 \pm 0.09}$ \\
                                & BPTI              & $0.35 \pm 0.06$ & $0.24 \pm 0.11$ & $\underline{\mathbf{0.36 \pm 0.00}}$ & $\;\,\underline{\mathbf{1155.2 \pm 0.10}}$ \\
    \midrule
    \multirow{6}{*}{MDTraj-RNN} & Hirustasin        & $\underline{0.76 \pm 0.02}$ & $\underline{0.74 \pm 0.01}$ & $\underline{0.38 \pm 0.01}$ & $982.73 \pm 2.06$ \\
                                & 50S Ribosomal     & $\underline{0.79 \pm 0.01}$ & $\underline{0.76 \pm 0.01}$ & $0.23 \pm 0.00$  & $\;\,\underline{\mathbf{1186.32 \pm 1.73}}$\\
                                & Protein Kinase C  & $\underline{0.72 \pm 0.02}$ & $\underline{0.68 \pm 0.01}$ & $0.29 \pm 0.00$ & $1104.47 \pm 1.17$ \\
                                & GB3               & $\underline{\mathbf{0.63 \pm 0.01}}$ & $\underline{\mathbf{0.61 \pm 0.03}}$ & $1.24 \pm 0.13$ & $\;\,\underline{\mathbf{1068.53 \pm 2.37}}$ \\
                                & BPTI              & $\underline{\mathbf{0.68 \pm 0.02}}$ & $\underline{0.66 \pm 0.01}$ & $1.59 \pm 0.26$ & $\underline{1256.03 \pm 1.79}$\\
    \midrule
    \multirow{6}{*}{Ramachandran ConvNet}   & Hirustasin        & $0.37 \pm 0.06$ & $0.32 \pm 0.05$ & $0.61 \pm 0.04$ & $1414.62 \pm 2.10$ \\
                                    & 50S Ribosomal     & $0.42 \pm 0.04$ & $0.38 \pm 0.04$ & $0.57 \pm 0.01$ & $1285.29 \pm 3.64$ \\
                                    & Protein Kinase C  & $0.29 \pm 0.11$ & $0.24 \pm 0.03$ & $0.83 \pm 0.04$ & $1298.62 \pm 1.30$ \\
                                    & GB3               & $0.37 \pm 0.08$ & $0.37 \pm 0.10$ & $1.64 \pm 0.19$ & $1237.09 \pm 2.26$ \\
                                    & BPTI              & $0.26 \pm 0.06$ & $0.24 \pm 0.03$ & $1.92 \pm 0.23$ & $1329.44 \pm 1.97$ \\
    \midrule
    \multirow{6}{*}{GCN}   & Hirustasin        & $0.93 \pm 0.01$ & $0.93 \pm 0.01$ & $0.42 \pm 0.01$ & $925.66 \pm 4.27$ \\
                                    & 50S Ribosomal     & $0.93 \pm 0.00$ & $0.93 \pm 0.00$ & $0.32 \pm 0.01$ & $1284.30 \pm 2.95$ \\
                                    & Protein Kinase C  & $0.94 \pm 0.00$ & $0.93 \pm 0.00$ & $0.34 \pm 0.01$ & $1101.51 \pm 2.32$ \\
                                    & GB3               & $0.44 \pm 0.03$ & $0.42 \pm 0.03$ & $0.90 \pm 0.02$ & $1195.75 \pm 8.74$ \\
                                    & BPTI              & $0.58 \pm 0.04$ & $0.57 \pm 0.04$ & $0.94 \pm 0.03$ & $\underline{1257.90 \pm 3.96}$ \\
    \midrule
    \multirow{6}{*}{GIN}   & Hirustasin        & $0.96 \pm 0.00$ & $0.96 \pm 0.00$ & $0.37 \pm 0.02$ & $919.12 \pm 5.45$ \\
                                    & 50S Ribosomal     & $0.97 \pm 0.00$ & $0.96 \pm 0.00$ & $0.26 \pm 0.02$ & $1285.05 \pm 1.50$ \\
                                    & Protein Kinase C  & $0.96 \pm 0.00$ & $0.96 \pm 0.00$ & $0.30 \pm 0.01$ & $1101.81 \pm 2.64$ \\
                                    & GB3               & $0.37 \pm 0.05$ & $0.36 \pm 0.04$ & $0.95 \pm 0.02$ & $1201.17 \pm 9.40$ \\
                                    & BPTI              & $0.47 \pm 0.08$ & $0.46 \pm 0.08$ & $0.98 \pm 0.03$ & $1270.83 \pm 5.50$ \\
    \midrule
    \multirow{6}{*}{GAT}   & Hirustasin        & $0.96 \pm 0.00$ & $0.96 \pm 0.00$ & $0.36 \pm 0.01$ & $1097.23 \pm 1.25$ \\
                                    & 50S Ribosomal     & $0.96 \pm 0.00$ & $0.96 \pm 0.00$ & $0.27 \pm 0.00$ & $1272.84 \pm 1.49$ \\
                                    & Protein Kinase C  & $0.96 \pm 0.00$ & $0.95 \pm 0.00$ & $0.30 \pm 0.01$ & $1097.23 \pm 1.25$ \\
                                    & GB3               & $0.43 \pm 0.03$ & $0.42 \pm 0.04$ & $0.92 \pm 0.02$ & $1194.86 \pm 6.89$ \\
                                    & BPTI              & $0.61 \pm 0.03$ & $0.60 \pm 0.02$ & $0.95 \pm 0.02$ & $\underline{1257.01 \pm 9.69}$ \\
    \midrule
    \multirow{6}{*}{\underline{\modelname{}}}    & Hirustasin        & $\underline{\mathbf{0.97 \pm 0.00}}$ & $\underline{\mathbf{0.97 \pm 0.00}}$ & $\underline{\mathbf{0.24 \pm 0.00}}$ & $\underline{872.97 \pm 0.63}$ \\
                                            & 50S Ribosomal     & $\underline{\mathbf{0.96 \pm 0.05}}$ & $\underline{\mathbf{0.95 \pm 0.06}}$ & $\underline{\mathbf{0.12 \pm 0.00}}$ & $\underline{1244.26 \pm 2.40}$ \\
                                            & Protein Kinase C  & $\underline{\mathbf{0.98 \pm 0.00}}$ & $\underline{\mathbf{0.98 \pm 0.00}}$ & $\underline{\mathbf{0.20 \pm 0.04}}$ & $\underline{1080.16 \pm 0.83}$ \\
                                            & GB3               & $\underline{0.60 \pm 0.02}$ & $\underline{0.60 \pm 0.01}$ & $\underline{0.86 \pm 0.07}$ & $1196.03 \pm 23.54$ \\
                                            & BPTI              & $\underline{0.66 \pm 0.07}$ & $\underline{\mathbf{0.67 \pm 0.06}}$ & $\underline{0.96 \pm 0.02}$ & $1264.99 \pm 11.31$ \\
    \bottomrule
    \end{tabular}}
    \label{tab:recon}
\end{table}

\subsection{Smoothness of latent representation}
\label{app:dirichlet}

In order to quantitatively measure the smoothness of the \textit{temporally organized} latent representations, we constructed a $k$-NN graph from the latent representations and then computed the Dirichlet energy as follows:
\begin{equation}
\mathcal{E}(\mathbf{x})=\frac{\mathbf{x}^TL\mathbf{x}}{\mathbf{x}^T\mathbf{x}},
\end{equation}
where $L$ is the Laplacian of the $k$-NN graph and $\mathbf{x}$ is a vector containing the time (frame number) corresponding to each protein in the same order in which it appears in $L$. 

\begin{table}[h]
    \small
    \caption{Dirichlet energy (lower is better). }
    \centering
    \resizebox{\textwidth}{!}{\begin{tabular}{cccccc}
    \toprule
    \textbf{Model} & \textbf{Hirustasin} & \textbf{50S Ribosomal} & \textbf{Protein Kinase C} & \textbf{GB3} & \textbf{BPTI}\\
    \hline
        MDTraj-VAE                  & $0.704$ & $0.683$ & $0.616$ & $0.371$ & $\textbf{\underline{0.326}}$ \\
        Ramachandran ConvNet                & $\underline{0.376}$ & $\underline{0.401}$ & $\underline{0.452}$ & $\underline{0.583}$ & $\underline{0.661}$ \\
        \underline{\modelname{}}    & \underline{$\mathbf{1.959 \times 10^{-9}}$} & \underline{$\mathbf{2.084 \times 10^{-9}}$} & \underline{$\mathbf{1.722 \times 10^{-16}}$} & \underline{$\mathbf{7.080 \times 10^{-4}}$} & $2.101$ \\
    \bottomrule
    \end{tabular}}
    \label{tab:dirichlet}
\end{table}

Table \ref{tab:dirichlet} compares the Dirichlet energy of latent representations obtained using various baselines from MD trajectories of proteins in the ATLAS and D. E. Shaw Research datasets. We see that the latent representations obtained from \modelname{} have the lowest Dirichlet energy compared to the baselines, demonstrating its ability to capture \textit{smooth} and temporally organized latent representations.

\subsection{ProtSCAPE uncovers stochastic switching between conformations in the GB3 protetin}
\label{app:deshaw}
In this case study, we considered a $10\mu$s  MD trajectory of the GB3 protein provided by D. E. Shaw Research. We obtained latent representations of the GB3 trajectory using ProtSCAPE and observed that it exhibits stochastic switching between two conformations, with stochastic back-and-forth transitions over time (see Figure \ref{fig:deshaw}B). Specifically, we noticed the presence of two clusters in the latent embeddings, with each cluster organized temporally, and each cluster corresponding to different conformation of the protein. By segmenting the two clusters, we showed that the protein transitions between an open loop and a closed loop conformation.

\begin{figure}[H]
\centering
\includegraphics[width=0.98\linewidth]{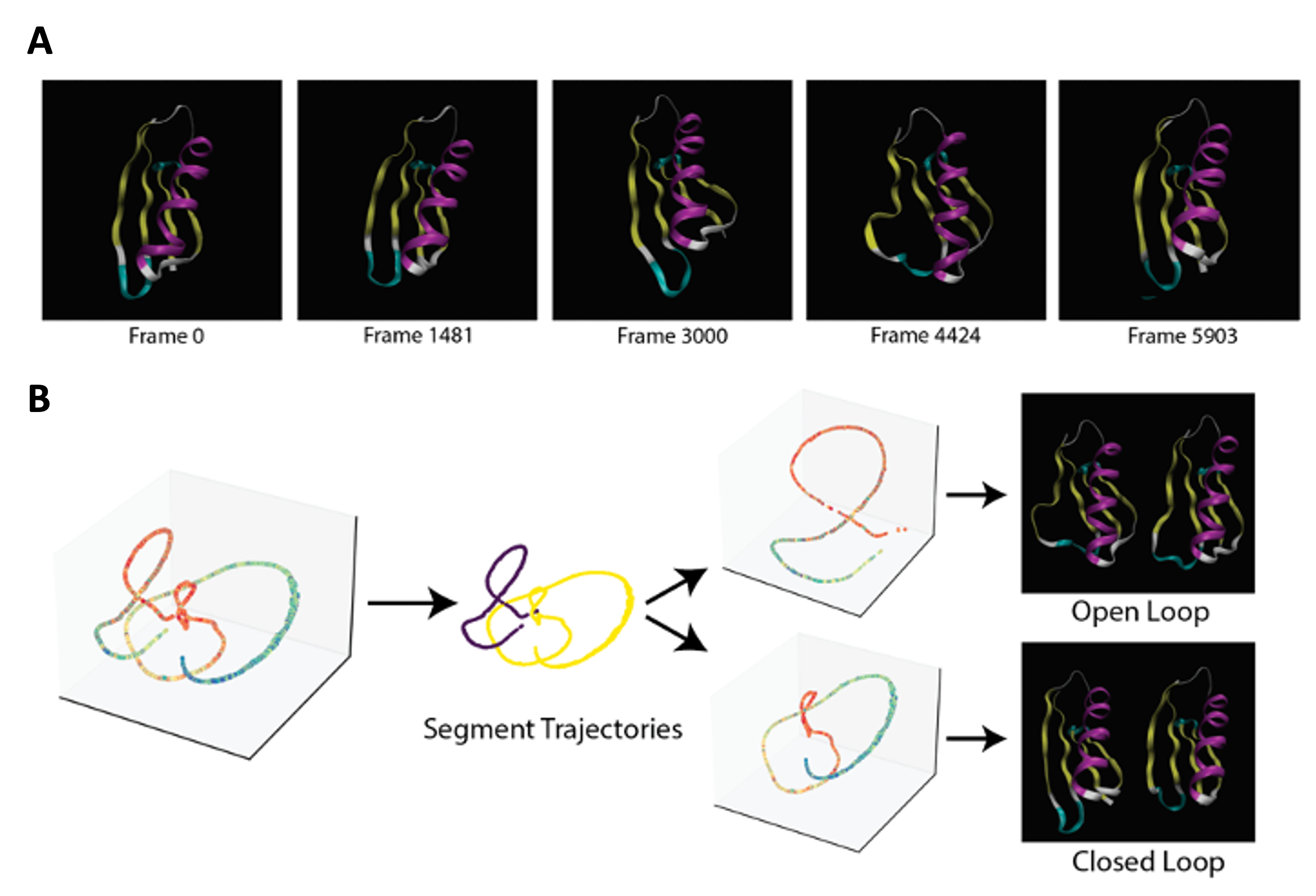}
\caption{(A) describes the GB3 protein changing conformations at different timepoints in the molecular dynamics simulation. (B) Latent representations captured by ProtSCAPE visualized using PHATE}
\label{fig:deshaw}
\end{figure}

\subsection{ProtSCAPE generalizes from short to long trajectories}
\label{app:s2l}
We trained \modelname{} on short trajectories containing 1000 frames and evaluated it on long trajectories containing 10,000 frames in the ATLAS \cite{vander2024atlas} dataset. The model trained on short trajectories was used to embed and decode frames in the longer trajectory (see Figure \ref{fig:s2l}). We observed that despite being trained only on short trajectories, \modelname{} can still create an organized latent space for longer trajectories, showing that the model is not limited by it's time component and can infer an organized latent space even for unseen poses and dynamics. This demonstrates that \modelname{} inherently learned to represent protein structure based on dynamics, not only the specific timescale and conformations it was trained on, but in general for a given protein. In future work, we will test the limitations of ProtSCAPE in extrapolating from short to long trajectories.

\begin{figure}[H]
\centering
\includegraphics[width=0.98\linewidth]{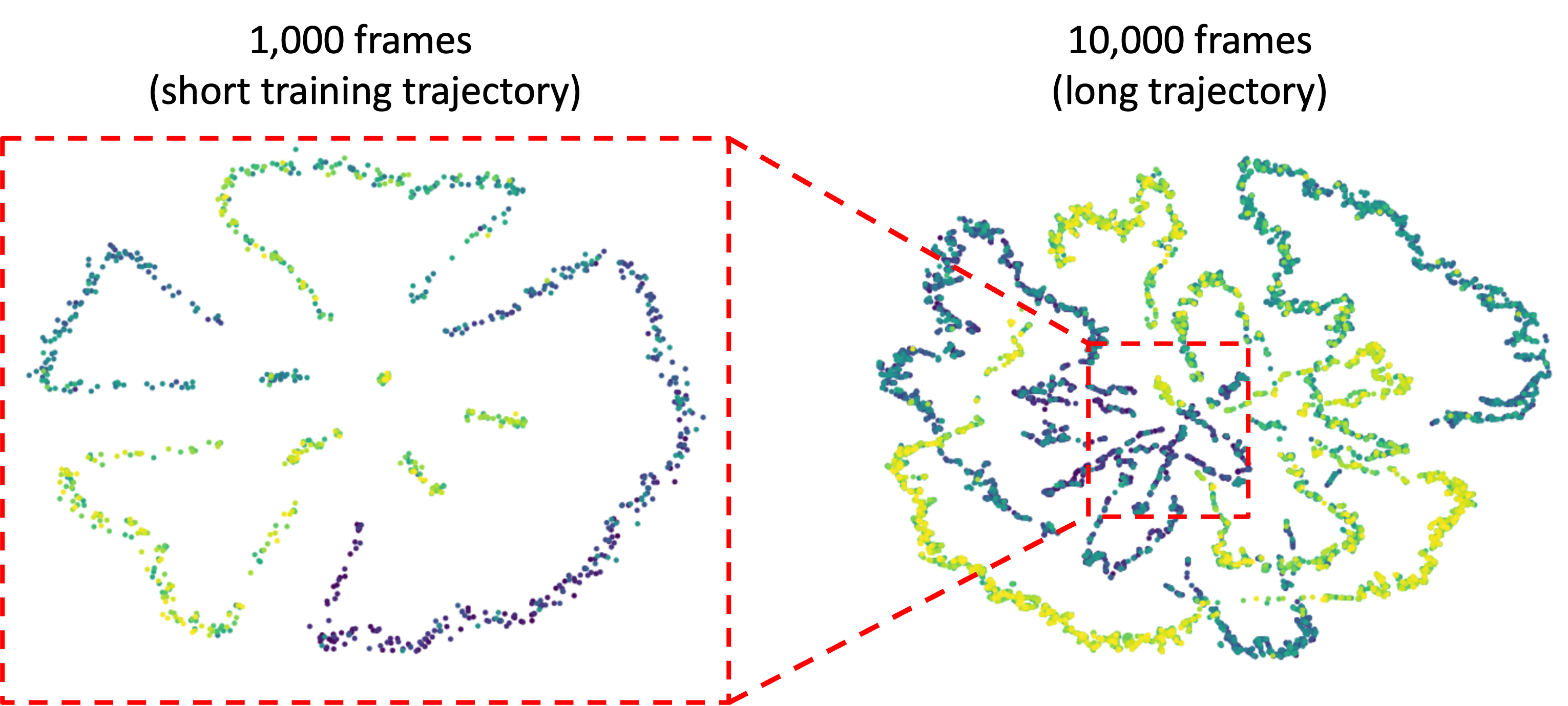}
\caption{PHATE plots of latent representations obtained using short and long MD trajectories of Protein Kinase C. ProtSCAPE was trained on short (1,000 frames) trajectories and evaluated on long (10,000 frames) trajectories.}
\label{fig:s2l}
\end{figure}

\subsection{ProtSCAPE generalizes from wild-type to mutant trajectories}
\label{app:wt2m}
\label{app: WT2M}

To assess the generalization capability of ProtSCAPE for mutant proteins, we first had to obtain mutant protein structures and perform MD simulations. Due to the combinatorial complexity of generating structures of all possible mutations, we restricted our analysis to a limited set of missense mutations, obtained by substituting one residue in the protein sequence. To generate these mutants, we retrieved the attention scores over the residues and chose the residues with the highest attention scores as the target for mutation. For example, residue 37 had the highest attention score in Protein Kinase C, so we chose to generate mutants by substituting the amino acid at this residue. To choose to which amino acid to substitute, we used the Sneath's index \cite{sneath_relations_1966} - obtaining the most dissimilar amino acids to generate a missense mutation. In our Protein Kinase C example, residue 37 is Asparagine (N), so we substituted it for Proline (P), which is the most dissimilar amino acid to Asparagine according to the Sneath's index.

To generate mutant MD simulations, we started with the wild-type (WT) folded structure available at ATLAS \citep{vander2024atlas} and used the PyMOL Mutagenesis tool \citep{PyMOL} to generate the specific mutation in the protein structure (e.g. N to P). Using the mutated protein structure, we performed MD simulations using the same procedure described in the ATLAS dataset \citep{vander2024atlas}. We used GROMACS \citep{gromacs} to place the protein in a triclinicbox, solvated it using TIP3P water molecules and neutralized with ${Na}^{+}/{Cl}^{-}$ ions. We then perform a step of energy minimization, followed by an equilibration in an NVT ensemble and an NPT ensemble. Finally, we ran the production molecular dynamics simulation using random starting velocities. To obtain the final simulation trajectory, we centered the protein and removed any ions and solvates from the structure, retrieving the 100ns of the simulation trajectory.

We show the latent representations of the mutant and WT protein in Figure \ref{fig:wt2ml}. We label the mutants using the following notation: p.WxxM where W is the wild-type amino acid, xx is the residue position and M is the mutated amino acid (e.g. pN37P). 

From Figure \ref{fig:wt2ml}A we observe that the MD trajectories of the mutants resemble those of the WT proteins, with significant overlap in the latent representations. However, in some substitutions with high levels of dissimilarity, we observe that the latent representations are not perfectly aligned. To investigate this further, we analyzed specifically residue 37 in Protein Kinase C (Figure~\ref{fig:wt2ml}B). We systematically generated mutants with increasing levels of dissimilarity based on Sneath's index. We observed that the mutants with greatest dissimilarity also has the biggest deviation from the WT trajectory in the latent embeddings, suggesting that these mutants attain conformations significantly different from the WT protein. 

\begin{figure}[H]
\centering
\includegraphics[width=0.98\linewidth]{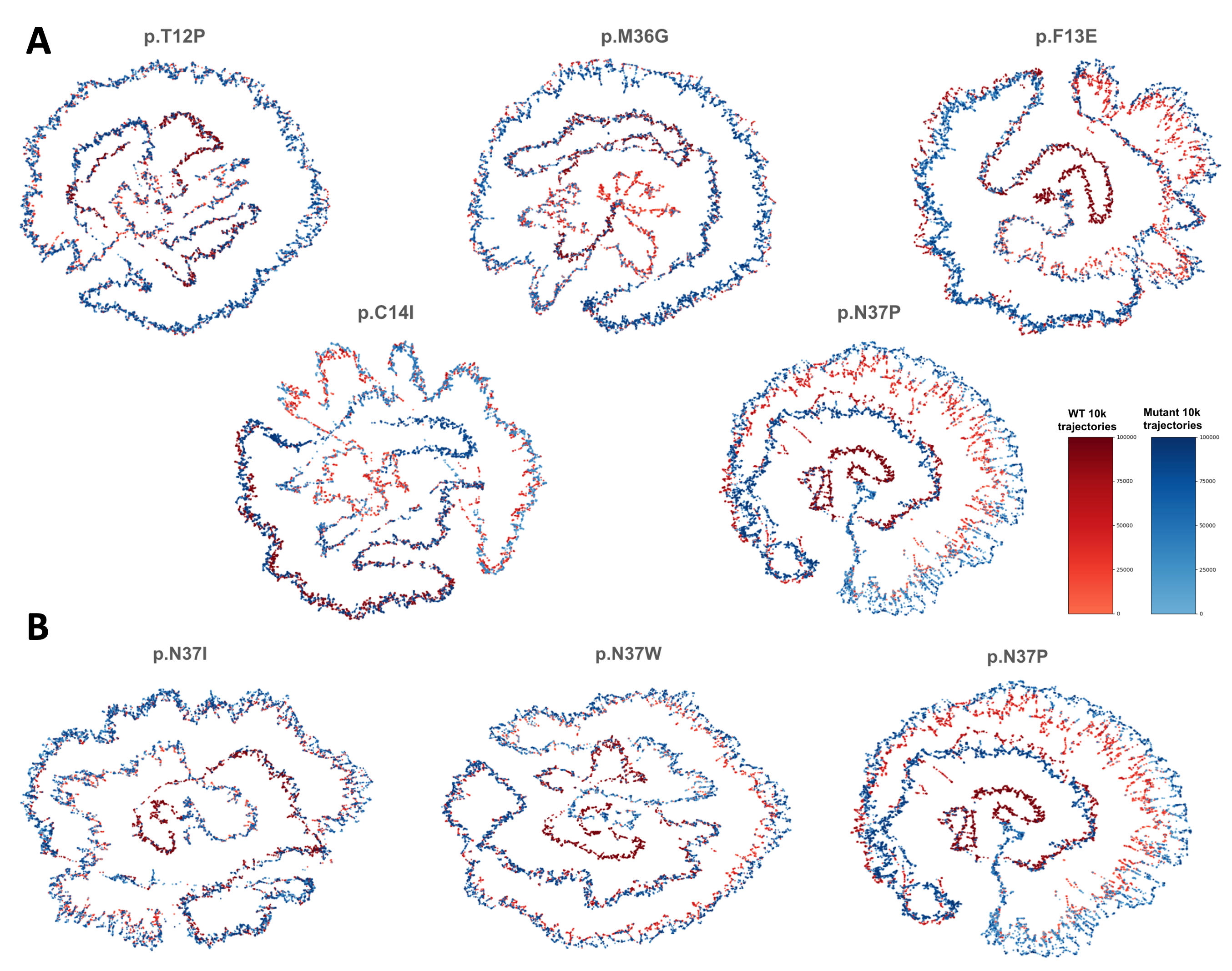}
\caption{Analysis of MD trajectories of Protein Kinase C mutants using latent representations obtained by ProtSCAPE. (A) PHATE plots of wild-type (WT) and mutant (M) latent representations. We observe that the latent representations of the mutant trajectories largely overlaps with the wild-type trajectory. (B) We analyze the latent trajectories of progressively dissimilar substitutions to residue 37. Increasing dissimilarity results in more deviations from the wild-type trajectory in the latent space.}
\label{fig:wt2ml}
\end{figure}

\renewcommand{\thetheorem}{\Alph{subsection}.\arabic{subsubsection}.\arabic{theorem}}
\renewcommand{\thelemma}{\Alph{subsection}.\arabic{subsubsection}.\arabic{lemma}}

\subsection{Proofs}
\label{sec: proofs}
Before we can prove Theorem \ref{thm: wavelet stability} and \ref{thm: stability}, we must introduce some notation and prove an auxiliary Theorem. In Section \ref{sec: notation symmetrized}, we will introduce a symmetric diffusion matrix $T$ as well as wavelets constructed from this symmetric diffusion operator. Then, in Section \ref{sec: stability symmetrized}, we will prove Theorem \ref{thm: T stablility} which is the analog of Theorem \ref{thm: wavelet stability}, but for these symmetric, $T$-diffusion wavelets. We will then complete the proofs of Theorems \ref{thm: wavelet stability} and \ref{thm: stability} in Sections \ref{sec: completing the proof} and \ref{sec: main theorem proof}. Lastly, we will prove an two auxiliary lemmas in Section \ref{sec: proof of lemma nonexpansive} and Section \ref{sec: proof of Lemma sum of powers}.

\subsubsection{The symmetric diffusion operator and associated $T$-diffusion wavelets}
\label{sec: notation symmetrized}

Let $L_N\coloneqq I-D^{-1/2}AD^{-1/2}$ be the symmetric normalized graph Laplacian. Since we assume $G$ is connected, it is known \cite{chung_1997} that $L_N$ admits an orthonormal basis of eigenvectors $\mathbf{v}_1,\ldots,\mathbf{v}_{n}$, with $L_N\mathbf{v}_i=\omega_i\mathbf{v_i},$ $0=\omega_1<\omega_2\leq\ldots\leq\omega_n$.

We define the symmetric diffusion operator $$
T\coloneqq D^{-1/2}PD^{1/2}=\frac{1}{2}\left(I+D^{-1/2}AD^{-1/2}\right),
$$
and note that we may write 
$$
T=I-\frac{1}{2}L_N.
$$
This implies that $T$ has the same eigenvectors as $L_N$ and that 
$$
T\mathbf{v}_i=\lambda_i\mathbf{v_i},\quad \lambda_i=1-\frac{1}{2}\omega_i.
$$
In particular, we have 
\begin{equation}\label{eqn: lambdas}
1=\lambda_1>\lambda_2\geq \ldots\geq \lambda_n\geq 0,
\end{equation}
where the final inequality uses the fact that the eigenvalues of $L_N$ lie between $0$ and $2$, (see, e.g., Lemma 1.7 of \cite{chung_1997}).

Given $T$,
we next consider an alternate version of the wavelet transform, defined by $\widetilde{\Psi}^{(T)}_0=I-T^{t_1}$ and
\begin{equation}\label{eqn: wavelets T}
\widetilde{\Psi}^{(T)}_j=T^{t_{j}}-T^{t_{j+1}},\text{ for }1\leq j \leq J, 
\end{equation}
which we will refer to as the $T$-diffusion wavelets. (Note that we are using the same scales $t_j$ for both $\widetilde{\Psi}_j$ and $\widetilde{\Psi}^{(T)}_j$.)

\subsubsection{Stability for the $T$-diffusion wavelets}
\label{sec: stability symmetrized}

In this section, we will prove the following theorem which is the  analog of Theorem \ref{thm: wavelet stability}, but for the $T$-diffusion wavelets.

\begin{theorem}\label{thm: T stablility}
Let $G'=(V',E')$ be a perturbed version of $G=(V,E)$ and let $T'$ be the analog of $T$ but on $G'$.
Let $\widetilde{\Psi}^{(T)}_j$ denote the $T$-diffusion wavelets defined as in \eqref{eqn: wavelets T}, and  let $(\widetilde{\Psi}^{(T)}_j)'$ denote the analog of $\widetilde{\Psi}^{(T)}_j$ on $G'$.  Then, for all $\mathbf{x}\in\mathbb{R}^n$, we have
\begin{align}
\sum_{j=0}^J\|\widetilde{\Psi_j}^{(T)}\mathbf{x}-(\widetilde{\Psi_j}^{(T)})'\mathbf{x}\|_2^2
\leq C_{\lambda_2*}\|T-T'\|\|\mathbf{x}\|_2^2,
\end{align}
where $C_{\lambda_2^*}$ is a constant depending only on $\lambda_2^*$.
\end{theorem}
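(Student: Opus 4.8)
The plan is to exploit the fact that each $T$-diffusion wavelet is a function of the symmetric operator $T$, specifically $\widetilde{\Psi}^{(T)}_j = h_j(T)$ where $h_0(\lambda) = 1-\lambda^{t_1}$ and $h_j(\lambda) = \lambda^{t_j}-\lambda^{t_{j+1}}$ for $1 \le j \le J$, and similarly $(\widetilde{\Psi}^{(T)}_j)' = h_j(T')$. Since both $T$ and $T'$ are symmetric with spectra contained in $[0,1]$ (by \eqref{eqn: lambdas}), the difference $h_j(T)-h_j(T')$ can be controlled via operator functional calculus. First I would reduce the sum over $j$ to a single quantity: I claim there is a constant $C_{\lambda_2^*}$, independent of $J$ and of the scales $t_j$, such that
\begin{equation*}
\sum_{j=0}^J \| h_j(T)\mathbf{x} - h_j(T')\mathbf{x}\|_2^2 \le C_{\lambda_2^*}\, \|T-T'\|\, \|\mathbf{x}\|_2^2 .
\end{equation*}
The key structural input is that the wavelet filters $\{h_j\}$ form a partition-of-unity-type family: they are each differences of powers of $\lambda$, they are uniformly bounded, and crucially their Lipschitz constants on the relevant spectral interval sum to something uniformly bounded. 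This is exactly the telescoping/complementarity property alluded to in the main text as the reason the bound does not blow up with $J$.

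The main technical step is a "difference of powers" estimate. For $s \ge 1$ and symmetric operators $T, T'$ with $\|T\|, \|T'\| \le 1$, one has $\|T^s - (T')^s\| \le s\,\|T-T'\|$ by the telescoping identity $T^s-(T')^s = \sum_{k=0}^{s-1} T^k (T-T')(T')^{s-1-k}$. Plugging this in naively gives a bound involving $\sum_j t_{j+1}$, which is \emph{not} uniform in $J$ — so the naive route fails, and this is where the real work lies. To get a $J$-independent constant I would instead split the spectrum at a threshold determined by $\lambda_2^* \coloneqq \max\{\lambda_2, \lambda_2'\} < 1$: on the subspace spanned by eigenvectors with eigenvalue below some cutoff the high powers $\lambda^{t_j}$ are exponentially small (since $t_j \ge 1$ is increasing), so the corresponding wavelet differences are summable with a geometric series whose ratio is governed by $\lambda_2^*$; on the complementary (low-frequency) part, one works directly with the operators restricted to a fixed-codimension or spectrally-localized piece where only $\lambda = 1$ contributes and the wavelet filters $h_j$ annihilate the eigenvalue $1$ for $j \ge 1$ (since $h_j(1) = 1-1 = 0$), leaving only $h_0$ and a finite, controlled contribution. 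This is precisely the strategy of Theorems 4.2–4.3 in \citet{perlmutter2023understanding} for the dyadic case, adapted here to arbitrary scales; I would cite and mirror their argument, checking that no step used the specific form $t_j = 2^j$.

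Concretely, the step ordering would be: (1) write $\widetilde{\Psi}^{(T)}_j = h_j(T)$ and record $h_j(1)=0$ for $j\ge 1$; (2) prove the elementary lemma $\|T^s-(T')^s\|\le s\|T-T'\|$; (3) decompose $\mathbf{R}^n$ spectrally relative to $T$ (and handle the mismatch of eigenspaces of $T$ vs.\ $T'$ via the $\|T-T'\|$ perturbation, since the eigenvector $\mathbf{v}_1 = \mathbf{v}_1'$ is the degree-weighted constant in both cases up to normalization); (4) on the "principal" part where eigenvalues are close to $1$, bound $\sum_j \|h_j(T)-h_j(T')\|^2$ by a finite constant times $\|T-T'\|$ using that $h_j$ vanishes at $1$ and is Lipschitz; (5) on the "decaying" part where eigenvalues are bounded away from $1$ by a gap controlled by $\lambda_2^*$, bound the tail $\sum_{j} (\lambda_2^*)^{2 t_j}$-type quantities by a convergent geometric series independent of the $t_j$ since they are strictly increasing integers $\ge 1$; (6) combine and absorb all constants into $C_{\lambda_2^*}$. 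The hard part — and the place I expect the argument to need the most care — is step (5) combined with the eigenspace-mismatch handling in step (3): making the tail estimate genuinely independent of the (arbitrary, learnable) scales $t_j$ requires that they are \emph{increasing} integers, not merely $\ge 1$, and one must be careful that the perturbation $T \mapsto T'$ does not destroy the spectral gap used to get geometric decay (this is why the constant is allowed to depend on $\lambda_2^* = \max\{\lambda_2,\lambda_2'\}$ and why one needs $G'$ to also be connected).
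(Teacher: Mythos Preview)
Your overall strategy --- isolate the eigenvalue $1$ and exploit the spectral gap governed by $\lambda_2^*$ --- is the right one and matches the paper's in spirit. But two of your steps, as written, would not close.

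First, the spectral decomposition in step (3) is taken relative to $T$ alone, and $T'$ does not respect it; on your ``decaying part'' you cannot bound $h_j(T')$ via powers of $\lambda_2'$ because $T'$ mixes that subspace with the principal direction of $T$. The paper's fix is cleaner than a projection of $\mathbf{x}$: it subtracts the rank-one top eigenprojections from the \emph{operators}, setting $\bar T\coloneqq T-\mathbf{v}\mathbf{v}^T$ and $\bar T'\coloneqq T'-\mathbf{v}'(\mathbf{v}')^T$, so that both reduced operators have operator norm at most $\lambda_2^*$. Since $T^k=\mathbf{v}\mathbf{v}^T+\bar T^k$, the wavelets for $j\ge 1$ become $\bar T^{t_j}-\bar T^{t_{j+1}}$, and the analysis proceeds entirely with $\bar T,\bar T'$. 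This also fuses your steps (2) and (5) into a single estimate: the telescoping identity applied to $\bar T,\bar T'$ (both of norm $\le\lambda_2^*$) yields $\|\bar T^{t_j}-(\bar T')^{t_j}\|\le t_j(\lambda_2^*)^{t_j-1}\|\bar T-\bar T'\|$, and then $\sum_j t_j^2(\lambda_2^*)^{2t_j-2}\le\sum_{k\ge 1}k^2(\lambda_2^*)^{2k-2}<\infty$ uniformly in $J$ and in the choice of scales. Your step (5) as stated bounds only $\sum_j(\lambda_2^*)^{2t_j}$, which controls the size of the wavelets but not their \emph{difference} with a factor of $\|T-T'\|$.

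Second, your parenthetical in step (3) that $\mathbf{v}_1=\mathbf{v}_1'$ is incorrect: the lead eigenvector of $T$ is proportional to $D^{1/2}\mathbf{1}$ and that of $T'$ to $(D')^{1/2}\mathbf{1}$, which differ whenever $D\neq D'$. Controlling $\|\mathbf{v}\mathbf{v}^T-\mathbf{v}'(\mathbf{v}')^T\|$ is a genuine step: the paper bounds it by $2\|\mathbf{v}-\mathbf{v}'\|_2$ and then invokes a Davis--Kahan-type estimate (cited from \citet{perlmutter2023understanding}) giving $\|\mathbf{v}-\mathbf{v}'\|_2^2\le C_{\lambda_2^*}\|T-T'\|$. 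Note the exponents: this eigenvector perturbation is precisely what produces the \emph{first} power of $\|T-T'\|$ in the final bound rather than $\|T-T'\|^2$, so it cannot be absorbed as a detail.
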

\begin{proof}

Let $\mathbf{v}=\mathbf{v}_1$ and $\mathbf{v}'=\mathbf{v}_1'$ denote the lead eigenvector of $T$ and $T'$. Let $\bar{T}\coloneqq T-\mathbf{v}\mathbf{v}^T$ and  $\bar{T}'\coloneqq T'-\mathbf{v}'(\mathbf{v}')^T$.

By construction, the $\ell^2$ operator norms of $\bar{T}$ and $\bar{T}'$ are given by $\|\bar{T}\|=\lambda_2$ and $\|\bar{T}'\|=\lambda'_2$. Let $\lambda_2^*=\max\{\lambda_2,\lambda_2'\}$ denote the larger of these two operator norms. Note that $\lambda_2^*<1$ by \eqref{eqn: lambdas}.
  
Since $\mathbf{v}_1,\ldots,\mathbf{v}_n$ forms an orthonormal basis, we observe that, for any $k\geq1$, we have 
$$
T^k=(\mathbf{v}\mathbf{v}^T+\bar{T})^k=\mathbf{v}\mathbf{v}^T+\bar{T}^k.
$$
This implies that $\widetilde{\Psi}^{(T)}_0=I-\mathbf{v}\mathbf{v}^T-\bar{T}^{t_1}$
and 
$$
\widetilde{\Psi}^{(T)}_j=T^{t_{j}}-T^{t_{j+1}}=\bar{T}^{t_{j}}-\bar{T}^{t_{j+1}}
$$
for $1\leq j\leq J+1$.
Thus, we have
\begin{align}
&\sum_{j=0}^J\|\widetilde{\Psi_j}^{(T)}\mathbf{x}-(\widetilde{\Psi_j}^{(T)})'\mathbf{x}\|_2^2\nonumber\\=&\left(\|\mathbf{v}\mathbf{v}^T-\mathbf{v}'(\mathbf{v}')^T-(\bar{T}^{t_1}-(\bar{T}')^{t_1}\|^2 + \sum_{j=1}^J\|(\bar{T}^{t_{j}}-\bar{T}^{t_{j+1}})-((\bar{T}')^{t_{j}}-(\bar{T}')^{t_{j+1}})\|^2\right)\|\mathbf{x}\|_2^2\nonumber\\
\leq&4\left(\|\mathbf{v}\mathbf{v}^T-\mathbf{v}'(\mathbf{v}')^T\|^2+\sum_{j=1}^{J+1}\left\|\bar{T}^{t_j}-\left(\bar{T}'\right)^{t_j}\right\|^2\right)\|\mathbf{x}\|^2_2\label{eqn: ell2psiestimate},
\end{align}
where we use the fact that 
$$(\bar{T}^{t_{j}}-\bar{T}^{t_{j+1}})-((\bar{T}')^{t_{j}}-(\bar{T}')^{t_{j+1}})=(\bar{T}^{t_j}-(\bar{T}')^{t_j})-(\bar{T}^{t_{j+1}}-(\bar{T}')^{t_{j+1}}),
$$
the inequality $\|A-B\|^2\leq 2(\|A\|^2+\|B\|^2)$ and observation that each of the powers  $t_j$ in \eqref{eqn: ell2psiestimate} appears at most twice in the sum in the previous line (once as the $j$ term and once as $j+1$ term).

To bound $\sum_{j=1}^{J+1}\left\|\bar{T}^{t_j}-\left(\bar{T}'\right)^{t_j}\right\|^2$, we will use the following lemma which is a variant of Lemma SM12.1 of \citet{perlmutter2023understanding} (see also Lemma 7 of \cite{chew2024geometric} and Equation 23 of \citet{gama2019diffusion}). For a proof, please see Section \ref{sec: proof of Lemma sum of powers}. 
\begin{lemma}\label{lem: sum of powers}
\begin{equation*}
\sum_{j=1}^{J+1}\left\|\bar{T}^{t_j}-\left(\bar{T}'\right)^{t_j}\right\|^2\leq C_{\lambda_2^*}\|\bar{T}-\bar{T}'\|^2,
\end{equation*}
where $C_{\lambda_2^*}>0$ is a constant depending only on $\lambda_2*$. 
\end{lemma}


The triangle inequality implies that 
$$
\|\bar{T}-\bar{T}'\|\leq \|T-T'\|+\|\mathbf{v}\mathbf{v}^T-\mathbf{v}'(\mathbf{v}')^T\|, 
$$
and so combining \eqref{eqn: ell2psiestimate} with \eqref{lem: sum of powers} yields
\begin{align}
\sum_{j=0}^J\|\widetilde{\Psi_j}^{(T)}\mathbf{x}-(\widetilde{\Psi_j}^{(T)})'\mathbf{x}\|_2^2&\leq4\left(\|\mathbf{v}\mathbf{v}^T-\mathbf{v}'(\mathbf{v}')^T\|^2+\sum_{j=1}^{J+1}\left\|\bar{T}^{t_j}-\left(\bar{T}'\right)^{t_j}\right\|^2\right)\|\mathbf{x}\|^2_2\nonumber\\
&\leq C_{\lambda_2^*}(\|\mathbf{v}\mathbf{v}^T-\mathbf{v}'(\mathbf{v}')^T\|^2+\|T-T'\|^2)\|\mathbf{x}\|_2^2.\label{eqn: plugged in lemma}
\end{align}

Next, using the fact that $\mathbf{v}$ has unit norm, one may verify that $\|\mathbf{v}\mathbf{v}^T-\mathbf{v}'(\mathbf{v}')^T\|\leq 2\|\mathbf{v}-\mathbf{v}'\|_2$ since, for all $\mathbf{x}$, we have 
\begin{align*}
 \left\|(\mathbf{v}\mathbf{v}^T-\mathbf{v}'(\mathbf{v}')^T)\mathbf{x}\right\|_2 &\leq \left\|\mathbf{v}(\mathbf{v}-\mathbf{v}')^T\mathbf{x}\right\|_2 + \left\|(\mathbf{v}-\mathbf{v}')(\mathbf{v}')^T\mathbf{x}\right\|_2 \\
     &\leq \|\mathbf{v}\|_2\left\|\mathbf{v}-\mathbf{v}'\right\|_2\|\mathbf{x}\|_2+\left\|\mathbf{v}-\mathbf{v}'\right\|_2\|\mathbf{v}'\|_2\|\mathbf{x}\|_2\\
     &\leq 2\left\|\mathbf{v}-\mathbf{v}'\right\|_2\|\mathbf{x}\|_2.
 \end{align*}

Lemma SM10.2 of \citet{perlmutter2023understanding} shows that 
$$
\|\mathbf{v}-\mathbf{v}'\|_2^2\leq C_{\lambda_2^*}\|T-T'\|.
$$
Combining this with \eqref{eqn: plugged in lemma}, we have 
\begin{align*}
\sum_{j=0}^J\|\widetilde{\Psi_j}^{(T)}\mathbf{x}-(\widetilde{\Psi_j}^{(T)})'\mathbf{x}\|_2^2
&\leq C_{\lambda_2^*}(\|\mathbf{v}\mathbf{v}^T-\mathbf{v}'(\mathbf{v}')^T\|^2+\|T-T'\|^2)\|\mathbf{x}\|_2^2\\
&\leq C_{\lambda_2*}(\|T-T'\|+\|T-T'\|^2)\|\mathbf{x}\|_2^2.
\end{align*}
Finally, we note that $$\|T-T'\|\leq \|T\|+\|T'\|\leq \lambda_2+\lambda_2'< 2,$$
which implies 
\begin{align}
\sum_{j=0}^J\|\widetilde{\Psi_j}^{(T)}\mathbf{x}-(\widetilde{\Psi_j}^{(T)})'\mathbf{x}\|_2^2
\leq C_{\lambda_2*}\|T-T'\|\|\mathbf{x}\|_2^2,\label{eqn: stability for T}
\end{align}
and thus completes the proof.
\end{proof}

\subsubsection{The Proof of Theorem \ref{thm: wavelet stability}}\label{sec: completing the proof}

To complete the proof of Theorem \ref{thm: wavelet stability} we recall the following result which is a simplified version of  \citet{perlmutter2023understanding}, Theorem 4.3 (where here we simplify the result by restricting to the case where each $r_j$ is the appropriate polynomial and assuming that $M=D^{-1/2}$, which is equivalent to assuming that $K=P$).

In order to state this result we briefly introduce some notation. Let $\tilde{\mathcal{W}}_J=\{\widetilde{\Psi_j}\}_{j=0}^J$, let $\tilde{\mathcal{W}}'_J=\{(\widetilde{\Psi_j})'\}_{j=0}^J$, and let $\tilde{\mathcal{W}}_J-\tilde{\mathcal{W}}'_J=\{\widetilde{\Psi_j}-(\widetilde{\Psi_j})'\}_{j=0}^J$. Let $\|\tilde{\mathcal{W}}_J-\tilde{\mathcal{W}}'_J\}_{j=0}^J\|_{D^{-1/2}}$ denote the operator norm of $\tilde{\mathcal{W}}_J-\tilde{\mathcal{W}}'_J$ on the weighted inner product space corresponding to $\|\cdot\|_{D^{-1/2}}$ given by
$$
\|\tilde{\mathcal{W}}_J-\tilde{\mathcal{W}}'_J\|_{D^{-1/2}}\coloneqq \sup_{\|\mathbf{x}\|_{D^{-1/2}}=1}\left(\sum_{j=0}^J\|\widetilde{\Psi_j}\mathbf{x}-(\widetilde{\Psi_j})'\mathbf{x}\|_{D^{-1/2}}^2\right)^{1/2}.
$$
Similarly, define 
$$
\|\tilde{\mathcal{W}}_J^{(T)}-(\tilde{\mathcal{W}}_J^{(T)})'\|\coloneqq \sup_{\|\mathbf{x}\|_{2}=1}\left(\sum_{j=0}^J\|\widetilde{\Psi_j}^{(T)}\mathbf{x}-(\widetilde{\Psi_j}^{(T)})'\mathbf{x}\|_{2}^2\right)^{1/2}.
$$

We may now state our result.

\begin{theorem}[Special Case of Theorem 4.3 of \citet{perlmutter2023understanding}]\label{thm: T43}
\begin{align}
\|\tilde{\mathcal{W}}_J-\tilde{\mathcal{W}}'_J\|^2_{D^{-1/2}}
\leq 6\left(\|\tilde{\mathcal{W}}_J^{(T)}-(\tilde{\mathcal{W}}_J^{(T)})'\|^2 + \kappa^2(\kappa+1)^2\right).
\end{align}
\end{theorem}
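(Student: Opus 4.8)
Since the statement is explicitly a special case of Theorem~4.3 of \citet{perlmutter2023understanding}, it would be legitimate simply to cite it; for completeness I would instead give a short direct proof whose engine is the similarity between the $P$- and $T$-diffusion wavelets. From $P = D^{1/2}TD^{-1/2}$ we get $P^{t} = D^{1/2}T^{t}D^{-1/2}$ for every $t$, hence, taking differences of powers, $\widetilde{\Psi}_j = D^{1/2}\widetilde{\Psi}_j^{(T)}D^{-1/2}$ and $\widetilde{\Psi}_j' = (D')^{1/2}(\widetilde{\Psi}_j^{(T)})'(D')^{-1/2}$ for all $j$. Since $\|\mathbf{u}\|_{D^{-1/2}} = \|D^{-1/2}\mathbf{u}\|_2$, I would fix $\mathbf{x}$ with $\|\mathbf{x}\|_{D^{-1/2}} = 1$, set $\mathbf{y}\coloneqq D^{-1/2}\mathbf{x}$ so that $\|\mathbf{y}\|_2 = 1$, and check that $\|\widetilde{\Psi}_j\mathbf{x} - \widetilde{\Psi}_j'\mathbf{x}\|_{D^{-1/2}} = \|\widetilde{\Psi}_j^{(T)}\mathbf{y} - B(\widetilde{\Psi}_j^{(T)})'B^{-1}\mathbf{y}\|_2$, where $B\coloneqq D^{-1/2}(D')^{1/2}$ is diagonal, $B^{-1} = (D')^{-1/2}D^{1/2}$, and $\|I-B\|\le\kappa$, $\|I-B^{-1}\|\le\kappa$ directly from \eqref{eqn: kappa}. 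The proof thus reduces to comparing $B(\widetilde{\Psi}_j^{(T)})'B^{-1}$ with $(\widetilde{\Psi}_j^{(T)})'$.

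Next I would isolate the conjugation error by writing $\widetilde{\Psi}_j^{(T)}\mathbf{y} - B(\widetilde{\Psi}_j^{(T)})'B^{-1}\mathbf{y} = (\widetilde{\Psi}_j^{(T)} - (\widetilde{\Psi}_j^{(T)})')\mathbf{y} + \bigl[(\widetilde{\Psi}_j^{(T)})' - B(\widetilde{\Psi}_j^{(T)})'B^{-1}\bigr]\mathbf{y}$ and expanding the bracket through the identity $(\widetilde{\Psi}_j^{(T)})' - B(\widetilde{\Psi}_j^{(T)})'B^{-1} = (I-B)(\widetilde{\Psi}_j^{(T)})'B^{-1} + (\widetilde{\Psi}_j^{(T)})'(I-B^{-1})$, so that each error summand carries a factor $I-B$ or $I-B^{-1}$. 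Squaring and summing over $j$, the Young-type split $\|a+b\|_2^2 \le 3\|a\|_2^2 + \tfrac32\|b\|_2^2$ separates the genuine wavelet-difference part, whose sum over $j$ is at most $\|\tilde{\mathcal{W}}_J^{(T)} - (\tilde{\mathcal{W}}_J^{(T)})'\|^2$ (using $\|\mathbf{y}\|_2 = 1$ and the definition of the operator norm), from the two error sums $\sum_j\|(I-B)(\widetilde{\Psi}_j^{(T)})'B^{-1}\mathbf{y}\|_2^2$ and $\sum_j\|(\widetilde{\Psi}_j^{(T)})'(I-B^{-1})\mathbf{y}\|_2^2$.

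To bound those error sums \emph{uniformly in $J$}, I would invoke the auxiliary non-expansiveness lemma for the $T$-diffusion wavelets proved elsewhere in this section, $\sum_{j=0}^J\|\widetilde{\Psi}_j^{(T)}\mathbf{z}\|_2^2\le\|\mathbf{z}\|_2^2$ for all $\mathbf{z}$, together with its analog on $G'$; this is immediate since, for each eigenvector of $T$, the eigenvalues $1-\lambda_i^{t_1}$ and $\lambda_i^{t_j}-\lambda_i^{t_{j+1}}$ of the $\widetilde{\Psi}_j^{(T)}$ are nonnegative and telescope in $j$ to $1-\lambda_i^{t_{J+1}}\le 1$, so their squares sum to at most $1$. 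Applying it with $\mathbf{z}=B^{-1}\mathbf{y}$ (and $\|B^{-1}\|\le 1+\kappa$) bounds the first error sum by $\kappa^2(1+\kappa)^2$, and with $\mathbf{z}=(I-B^{-1})\mathbf{y}$ bounds the second by $\kappa^2$; then $\|u+v\|_2^2\le 2\|u\|_2^2+2\|v\|_2^2$ gives total error at most $4\kappa^2(1+\kappa)^2$, and $\tfrac32\cdot 4\kappa^2(1+\kappa)^2 = 6\kappa^2(\kappa+1)^2$. Combining with the wavelet-difference part and taking the supremum over $\mathbf{x}$ yields $\|\tilde{\mathcal{W}}_J-\tilde{\mathcal{W}}'_J\|^2_{D^{-1/2}} \le 3\|\tilde{\mathcal{W}}_J^{(T)} - (\tilde{\mathcal{W}}_J^{(T)})'\|^2 + 6\kappa^2(\kappa+1)^2 \le 6\bigl(\|\tilde{\mathcal{W}}_J^{(T)} - (\tilde{\mathcal{W}}_J^{(T)})'\|^2 + \kappa^2(\kappa+1)^2\bigr)$.

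The delicate point is the third step: estimating $B(\widetilde{\Psi}_j^{(T)})'B^{-1} - (\widetilde{\Psi}_j^{(T)})'$ scale by scale in operator norm would cost a factor $J+1$, which would be fatal because the bound must be independent of the number of scales and of the learned scales $t_j$. Keeping the conjugation error inside a single use of the frame inequality $\sum_j\|(\widetilde{\Psi}_j^{(T)})'\mathbf{z}\|_2^2\le\|\mathbf{z}\|_2^2$ is exactly what prevents the blow-up and makes the constant in Theorem~\ref{thm: T43} — hence also in Theorems~\ref{thm: wavelet stability} and \ref{thm: stability} — a genuine absolute constant; the rest is routine bookkeeping of numerical constants.
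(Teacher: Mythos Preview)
Your proof is correct. The paper does not actually prove this statement: it simply records it as a special case of Theorem~4.3 of \citet{perlmutter2023understanding} and adds a remark verifying the one hypothesis needed to invoke that external result, namely that the $P$- and $T$-diffusion wavelet banks (augmented by low-pass filters) are non-expansive frames on the appropriate inner-product spaces, citing Theorem~1 of \citet{tong2022learnable}.

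Your approach is therefore genuinely different in presentation --- you give a short self-contained argument rather than a citation --- but the engine is the same: the similarity $P=D^{1/2}TD^{-1/2}$ converts the $\|\cdot\|_{D^{-1/2}}$ problem into an $\ell^2$ problem up to conjugation by $B=D^{-1/2}(D')^{1/2}$, and the non-expansive frame inequality $\sum_j\|(\widetilde{\Psi}_j^{(T)})'\mathbf{z}\|_2^2\le\|\mathbf{z}\|_2^2$ (precisely the hypothesis the paper's Remark checks) controls the conjugation error uniformly in $J$ and in the learned scales $t_j$. Your observation that bounding $B(\widetilde{\Psi}_j^{(T)})'B^{-1}-(\widetilde{\Psi}_j^{(T)})'$ scale by scale would cost a factor $J+1$, and that one must instead push the error through a single application of the frame bound, is exactly the point. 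A small bonus: your intermediate inequality $3\|\tilde{\mathcal{W}}_J^{(T)}-(\tilde{\mathcal{W}}_J^{(T)})'\|^2+6\kappa^2(\kappa+1)^2$ is slightly sharper than the stated $6(\cdot+\cdot)$ before you relax it.
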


\begin{remark}
We do note that in order to apply Theorem 4.3 of \citet{perlmutter2023understanding} we need to verify that each of the wavelet filter banks, when paired with an appropriate low-pass filter, form a non-expansive frame on the appropriate inner-product space (corresponding to the norm $\|\cdot\|_{D^{-1/2}}$ in the case of the wavelets constructed via $P$ and the standard $\ell^2$ inner-product space for the $T$-diffusion wavelets). For the $P$-diffusion wavelets, this follows immediately from Theorem 1 of \citet{tong2022learnable}. For the $T$-diffusion wavelets, this may be established by repeating the proof of \citet{tong2022learnable}, Theorem 1, line by line, but replacing $D^{-1/2}$ with the identity in every step. (See also Proposition 2.2 of \cite{perlmutter2023understanding} which gives frame bounds for the $T$-diffusion wavelets in the case where the scales are chosen to be dyadic powers.)
\end{remark}

Additionally, we need to recall the following result, which is a special case of \citet{perlmutter2023understanding}, Proposition 4.6 (whereas before we restrict to the case where $M=D^{-1/2}$).
\begin{theorem}{Special Case of Proposition 4.6 of \citet{perlmutter2023understanding}}\label{thm: P46}
    \begin{equation*}
\|T-T'\|\leq \kappa(1+R^3)+R\|P-P'\|_{D^{-1/2}},
    \end{equation*}
    where $\|T-T'\|$ and $\|P-P'\|_{D^{-1/2}}$ are the operator norms of the relevant matrices on the standard $\ell^2$ inner product space and the weighted inner-product space corresponding to $\|\cdot\|_{D^{-1/2}}$.
\end{theorem}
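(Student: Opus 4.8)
The plan is to strip away the weighting and reduce the whole estimate to the ordinary $\ell^2$ operator norm. First I would record the identity $\|M\|_{D^{-1/2}}=\|D^{-1/2}MD^{1/2}\|_2$, valid for any matrix $M$, which follows at once from the substitution $\mathbf{y}=D^{-1/2}\mathbf{x}$ in the definition of $\|M\|_{D^{-1/2}}$. Introduce the invertible diagonal matrix $B\coloneqq D^{-1/2}(D')^{1/2}$, so that $B^{-1}=(D')^{-1/2}D^{1/2}$; then, reading off \eqref{eqn: kappa} and \eqref{eqn: R}, we have $\kappa=\max\{\|I-B\|,\|I-B^{-1}\|\}$ and $R=\max\{\|B\|,\|B^{-1}\|\}$, and moreover $R\geq 1$ since $\|B\|\,\|B^{-1}\|\geq\|BB^{-1}\|=1$.

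Next, using $P=D^{1/2}TD^{-1/2}$ and $P'=(D')^{1/2}T'(D')^{-1/2}$, a one-line computation gives $D^{-1/2}PD^{1/2}=T$ and $D^{-1/2}P'D^{1/2}=BT'B^{-1}$, so $\|P-P'\|_{D^{-1/2}}=\|T-BT'B^{-1}\|_2$. I would then split $T-T'=(T-BT'B^{-1})+(BT'B^{-1}-T')$ and apply the triangle inequality. The first summand contributes exactly $\|P-P'\|_{D^{-1/2}}$. For the second, write $BT'B^{-1}-T'=(B-I)T'B^{-1}+T'(B^{-1}-I)$ and bound its $\ell^2$ norm by submultiplicativity together with $\|T'\|_2\leq 1$ (the eigenvalues of $T'=I-\tfrac12 L_N'$ lie in $[0,1]$); this yields $\|BT'B^{-1}-T'\|_2\leq\kappa R+\kappa$.

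Combining the two pieces gives $\|T-T'\|\leq\|P-P'\|_{D^{-1/2}}+\kappa(R+1)$, and since $R\geq 1$ we have $R+1\leq 1+R^3$ and $\|P-P'\|_{D^{-1/2}}\leq R\|P-P'\|_{D^{-1/2}}$, which is precisely the claimed inequality (and in fact is slightly stronger). Alternatively, one can simply invoke Proposition 4.6 of \citet{perlmutter2023understanding} with the parameter choices $M=D^{-1/2}$ and $K=P$ and specialize, in which case the only work is matching hypotheses. I do not expect a serious obstacle: the one place requiring care is the bookkeeping between the two weighted inner-product spaces and the non-commutativity of $B$ with $T'$ — in particular making sure the conjugation identity $D^{-1/2}P'D^{1/2}=BT'B^{-1}$ is written with the correct degree matrices on the correct sides.
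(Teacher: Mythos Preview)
Your proposal is correct. The direct argument you sketch --- conjugating by $B=D^{-1/2}(D')^{1/2}$, identifying $\|P-P'\|_{D^{-1/2}}=\|T-BT'B^{-1}\|_2$, and then splitting off $BT'B^{-1}-T'=(B-I)T'B^{-1}+T'(B^{-1}-I)$ --- is clean and gives the sharper bound $\|T-T'\|\leq \|P-P'\|_{D^{-1/2}}+\kappa(R+1)$, from which the stated inequality follows using $R\geq 1$.

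As for comparison with the paper: the paper does not actually supply a proof of this theorem. It is stated purely as a special case of Proposition~4.6 of \citet{perlmutter2023understanding} (with the specialization $M=D^{-1/2}$, $K=P$) and then invoked as a black box in the proof of Theorem~\ref{thm: wavelet stability}. So your ``alternative'' of simply citing Proposition~4.6 and matching hypotheses is precisely the paper's route, and your direct computation goes beyond what the paper provides by making the argument self-contained.
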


It is now straightforward to complete the proof.

\begin{proof}[The proof of Theorem \ref{thm: wavelet stability}]
Combining Theorems \ref{thm: T stablility}, \ref{thm: T43} and \ref{thm: P46} yields 
\begin{align*}
\|\tilde{\mathcal{W}}_J-\tilde{\mathcal{W}}'_J\|^2_{D^{-1/2}}
&\leq 6\left(\|\tilde{\mathcal{W}}_J^{(T)}-(\tilde{\mathcal{W}}_J^{(T)})\|^2 + \kappa^2(\kappa+1)^2\right)\\
&\leq C_{\lambda_2*}\left(\|T-T'\| + \kappa^2(\kappa+1)^2\right)\\
&\leq C_{\lambda_2*}\left(\kappa(1+R^3)+R\|P-P'\|_{D^{-1/2}} + \kappa^2(\kappa+1)^2\right)
\end{align*}
as desired.
\end{proof}

\subsubsection{The Proof of Theorem \ref{thm: stability}}\label{sec: main theorem proof}

To prove Theorem \ref{thm: stability}, we will need the following lemma. For a proof, please see Section \ref{sec: proof of lemma nonexpansive}.

\begin{lemma}\label{lem: nonexpansive iterated}
For any $\ell\geq 1$, we have
\begin{align*}
\sum_{0\leq j_1,\ldots,j_\ell\leq J}\|\widetilde{U}[j_1,\ldots,j_\ell]\mathbf{x}\|^2_{D^{-1/2}}\leq \|\mathbf{x}\|_{D^{-1/2}}.
\end{align*}
\end{lemma}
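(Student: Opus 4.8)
The plan is to prove the lemma by induction on the scattering depth $\ell$, resting on two elementary facts about the two ingredients of a scattering coefficient: the pointwise modulus $M$ and the generalized diffusion wavelets $\widetilde{\Psi}_j$. First I would record that $M$ is non-expansive on the weighted space $(\mathbb{R}^n,\|\cdot\|_{D^{-1/2}})$; in fact it is an isometry, since $\|M\mathbf{x}\|_{D^{-1/2}}^2=\sum_{v\in V}|\mathbf{x}(v)|^2/\mathbf{d}(v)=\|\mathbf{x}\|_{D^{-1/2}}^2$ because $\bigl||\mathbf{x}(v)|\bigr|=|\mathbf{x}(v)|$. Second, I would invoke the \emph{non-expansive frame} property of the generalized wavelets on this weighted space, i.e.\ $\sum_{j=0}^{J}\|\widetilde{\Psi}_j\mathbf{x}\|_{D^{-1/2}}^2\le\|\mathbf{x}\|_{D^{-1/2}}^2$; this is the upper-frame-bound half of Theorem~1 of \citet{tong2022learnable}, whose validity in exactly this setting is already recalled in the remark following Theorem~\ref{thm: T43}, and discarding the low-pass term from that frame inequality only helps us. (The right-hand side of the lemma is understood as $\|\mathbf{x}\|_{D^{-1/2}}^2$, matching the squared norms on the left.)

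With these in hand, the base case $\ell=1$ is immediate: $\widetilde{U}[j]\mathbf{x}=M\widetilde{\Psi}_j\mathbf{x}$, so $\sum_{j=0}^{J}\|\widetilde{U}[j]\mathbf{x}\|_{D^{-1/2}}^2=\sum_{j=0}^{J}\|\widetilde{\Psi}_j\mathbf{x}\|_{D^{-1/2}}^2\le\|\mathbf{x}\|_{D^{-1/2}}^2$. For the inductive step I would peel off the outermost layer. Writing $\mathbf{z}_{j_1,\ldots,j_{\ell-1}}\coloneqq\widetilde{U}[j_1,\ldots,j_{\ell-1}]\mathbf{x}$, the recursion $\widetilde{U}[j_1,\ldots,j_\ell]\mathbf{x}=M\widetilde{\Psi}_{j_\ell}\mathbf{z}_{j_1,\ldots,j_{\ell-1}}$ gives, for each fixed $(j_1,\ldots,j_{\ell-1})$,
\[
\sum_{j_\ell=0}^{J}\|\widetilde{U}[j_1,\ldots,j_\ell]\mathbf{x}\|_{D^{-1/2}}^2=\sum_{j_\ell=0}^{J}\|\widetilde{\Psi}_{j_\ell}\mathbf{z}_{j_1,\ldots,j_{\ell-1}}\|_{D^{-1/2}}^2\le\|\mathbf{z}_{j_1,\ldots,j_{\ell-1}}\|_{D^{-1/2}}^2 ,
\]
using the isometry of $M$ followed by the frame bound. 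Summing over $(j_1,\ldots,j_{\ell-1})$ and applying the inductive hypothesis to $\sum_{0\le j_1,\ldots,j_{\ell-1}\le J}\|\widetilde{U}[j_1,\ldots,j_{\ell-1}]\mathbf{x}\|_{D^{-1/2}}^2$ delivers the bound $\|\mathbf{x}\|_{D^{-1/2}}^2$, which closes the induction. Equivalently, one could peel off the innermost wavelet $\widetilde{\Psi}_{j_1}$ first and apply the hypothesis to the signal $M\widetilde{\Psi}_{j_1}\mathbf{x}$; either ordering works.

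Beyond this bookkeeping, the only real content is the frame bound for the \emph{generalized}, learnable-scale wavelets $\{\widetilde{\Psi}_j\}_{j=0}^{J}$ in the weighted $\|\cdot\|_{D^{-1/2}}$ norm, rather than for the classical dyadic wavelets $\{\Psi_j\}$, so this is where I would be most careful: I would check that the cited non-expansiveness genuinely holds for an arbitrary increasing scale sequence $t_1\le\cdots\le t_{J+1}$ with $t_1\ge 1$, which is exactly the generality in which Theorem~1 of \citet{tong2022learnable} is stated and is the one external input the argument leans on. A minor point worth stating explicitly is that $M$ sends every signal to a nonnegative real signal, so all intermediate signals $\mathbf{z}_{j_1,\ldots,j_k}$ lie in the same real space on which the frame bound is asserted, and the induction is therefore well posed.
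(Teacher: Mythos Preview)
Your proof is correct and follows essentially the same approach as the paper: induction on the depth $\ell$, with the base case and the inductive step both resting on the non-expansive frame bound from Theorem~1 of \citet{tong2022learnable} applied in the weighted norm, after peeling off the outermost wavelet layer. Your observation that $M$ is in fact an isometry (not merely non-expansive) on this weighted space is a small sharpening the paper does not make explicit, but the overall argument is the same.
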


Now, we will prove Corollary \ref{thm: stability}.
\begin{proof}

Let $$\mathcal{C}\coloneqq \sup_{\|\mathbf{x}\|_{D^{-1/2}}=1}\left(\sum_{j=0}^J\|(\widetilde{\Psi_j})'\mathbf{x}\|_{D^{-1/2}}^2\right)^{1/2}$$
denote the operator norm of the perturbed wavelets with respect to the norm $\|\cdot\|_{D^{-1/2}}$ and let 
$$
\mathcal{A}\coloneqq\|\tilde{\mathcal{W}}_J-\tilde{\mathcal{W}}'_J\|_{D^{-1/2}}
$$
denote the operator norm of the difference between the $\tilde{\mathcal{W}}_J$ and the perturbed-graph wavelets $\tilde{\mathcal{W}}'_J$.

We will show that, for all $\ell\geq 1$, we have 
\begin{equation}\label{eqn: inequality in terms of A and C}
\sum_{0\leq j_1,\ldots,j_\ell\leq J}\|\widetilde{U}[j_1,\ldots,j_\ell]\mathbf{x}-\widetilde{U}'[j_1,\ldots,j_\ell]\mathbf{x}\|^2_{D^{-1/2}}\leq \mathcal{A}^2\left(\sum_{k=0}^{\ell-1}\mathcal{C}^k\right)^2\|\mathbf{x}\|^2_{D^{-1/2}}.
\end{equation}

Let $t_\ell\coloneqq \left(\sum_{0\leq j_1,\ldots,j_\ell\leq J}\|\widetilde{U}[j_1,\ldots,j_\ell]\mathbf{x}-\widetilde{U}'[j_1,\ldots,j_\ell]\mathbf{x}\|^2_{D^{-1/2}}\right)^{1/2}$. Since the modulus operator $M$ is non-expansive (i.e., $\|M(\mathbf{x}-\mathbf{y})\|_{D^{-1/2}}\leq \|\mathbf{x}-\mathbf{y}\|_{D^{-1/2}}$), it follows that $t_1\leq \mathcal{A}\|\mathbf{x}\|_{D^{-1/2}}$ which proves the base case.

Now, suppose by induction that \eqref{eqn: inequality in terms of A and C} holds for $\ell$. Then,
\begin{align*}
t_{\ell+1}&=\left(\sum_{0\leq j_1,\ldots,j_\ell,j_{\ell+1}\leq J}\|\widetilde{U}[j_1,\ldots,j_{\ell+1}]\mathbf{x}-\widetilde{U}'[j_1,\ldots,j_{\ell+1}]\mathbf{x}\|^2_{D^{-1/2}}\right)^{1/2}\\
&=\left(\sum_{0\leq j_1,\ldots,j_{\ell},j_{\ell+1}\leq J}\|M\widetilde{\Psi}_{j_{\ell+1}}\ldots M\widetilde{\Psi}_{j_{1}}\mathbf{x}-M\widetilde{\Psi}'_{j_{\ell+1}}\ldots M\widetilde{\Psi}'_{j_{1}}\mathbf{x}\|^2_{D^{-1/2}}\right)
\\
&\leq \left(\sum_{0\leq j_1,\ldots,j_{\ell},j_{\ell+1}\leq J}\|(\widetilde{\Psi}_{j_{\ell+1}}-\widetilde{\Psi}_{j_{\ell+1}}')M\widetilde{\Psi}_{j_{\ell}}\ldots M\widetilde{\Psi}_{j_{1}}\mathbf{x}\|^2_{D^{-1/2}}\right)^{1/2}
\\
&\quad+\left(\sum_{0\leq j_1,\ldots,j_{\ell},j_{\ell+1}\leq J}\|\widetilde{\Psi}'_{j_{\ell+1}}(M \widetilde{\Psi}_{j_{\ell}}\ldots M\widetilde{\Psi}_{j_{1}}\mathbf{x}-M \widetilde{\Psi}'_{j_{\ell}}\ldots M\widetilde{\Psi}'_{j_{1}}\mathbf{x})\|^2_{D^{-1/2}}\right)^{1/2}\\
&\leq \mathcal{A}\left(\sum_{0\leq j_1,\ldots,j_{\ell}\leq J}\|M\widetilde{\Psi}_{j_{\ell}}\ldots M\widetilde{\Psi}_{j_{1}}\mathbf{x}\|^2_{D^{-1/2}}\right)^{1/2}
\\
&\quad+\mathcal{C}\left(\sum_{0\leq j_1,\ldots,j_{\ell}\leq J}\|(M \widetilde{\Psi}_{j_{\ell}}\ldots M\widetilde{\Psi}_{j_{1}}\mathbf{x}-M \widetilde{\Psi}'_{j_{\ell}}\ldots M\widetilde{\Psi}'_{j_{1}}\mathbf{x})\|^2_{D^{-1/2}}\right)^{1/2}\\
&\leq \mathcal{A}\|\mathbf{x}\|_{D^{-1/2}}+t_\ell\mathcal{C},
\end{align*}
where in the final inequality we use Lemma \ref{lem: nonexpansive iterated}. 
By the inductive hypothesis, we have 
$$
t_\ell\leq \mathcal{A}\sum_{k=0}^{\ell-1}\mathcal{C}^k\|\mathbf{x}\|_{D^{-1/2}}.
$$
Therefore,
$$
t_{\ell+1}\leq \mathcal{A}\|\mathbf{x}\|_{D^{-1/2}}+\mathcal{A}\sum_{k=0}^{\ell-1}\mathcal{C}^{k+1}\|\mathbf{x}\|_{D^{-1/2}}=\mathcal{A}\sum_{k=0}^\ell\mathcal{C}^k\|\mathbf{x}\|_{D^{-1/2}}.
$$
Squaring both sides yields \eqref{eqn: inequality in terms of A and C}.
Proposition 4.10 of \citet{perlmutter2023understanding}
implies that $\mathcal{C}=R^2$. Thus, we have 
\begin{equation}\label{eqn: inequality in terms of A but C}
\sum_{0\leq j_1,\ldots,j_\ell\leq J}\|\widetilde{U}[j_1,\ldots,j_\ell]\mathbf{x}-\widetilde{U}'[j_1,\ldots,j_\ell]\mathbf{x}\|^2_{D^{-1/2}}\leq \mathcal{A}^2\left(\sum_{k=0}^{\ell-1}R^{2k}\right)^2\|\mathbf{x}\|^2_{D^{-1/2}}.
\end{equation}
\end{proof}
\subsubsection{The Proof of Lemma \ref{lem: nonexpansive iterated}}\label{sec: proof of lemma nonexpansive}
\begin{proof}
First, we note using the fact that the modulus operator $M$ is non-expansive, that 
$$ \left(\sum_{j=0}^J\|(\widetilde{\Psi_j})\mathbf{x}\|_{D^{-1/2}}^2\right)^{1/2}\leq \|\mathbf{x}\|_{D^{-1/2}}$$
by Theorem 1 of \citet{tong2022learnable}, which proves the claim for $\ell=1$. 

Now, assuming by induction that the claim holds for $\ell$, we note that 

\begin{align*}
\sum_{0\leq j_1,\ldots,j_\ell,j_{\ell+1}\leq J}\|\widetilde{U}[j_1,\ldots,j_\ell,j_{\ell+1}]\mathbf{x}\|^2_{D^{-1/2}}&=
\sum_{0\leq j_1,\ldots,j_\ell,j_{\ell+1}\leq J}\|M\widetilde{\Psi}_{j_{\ell+1}}\ldots M\widetilde{\Psi}_{j_{1}}\mathbf{x}\|^2_{D^{-1/2}}\\
&=
\sum_{0\leq j_1,\ldots,j_\ell\leq J}\left(\sum_{0\leq j_{\ell+1}\leq J }\|M\widetilde{\Psi}_{j_{\ell+1}}\ldots M\widetilde{\Psi}_{j_{1}}\mathbf{x}\|^2_{D^{-1/2}}\right)\\
&\leq \sum_{0\leq j_1,\ldots,j_\ell\leq J}\|M\widetilde{\Psi}_{j_{\ell}}\ldots M\widetilde{\Psi}_{j_{1}}\mathbf{x}\|^2_{D^{-1/2}}\\
&\leq \|\mathbf{x}\|_{D^{-1/2}}
\end{align*}
again using Theorem 1 of \cite{tong2022learnable} as well as the inductive hypothesis.

\end{proof}

\subsubsection{The proof of Lemma \ref{lem: sum of powers}}\label{sec: proof of Lemma sum of powers}

\begin{proof}
For $0\leq t\leq 1$, $1\leq j\leq J+1$, let 
\begin{equation*}
    H_j(t)=\left(t\bar{T}+(1-t)\bar{T}'\right)^{t_j}.
    \end{equation*}
Then,
\begin{align*}
\|\bar{T}^{t_j}-\left(\bar{T}'\right)^{t_j}\|=\|H_j(1)-H_j(0)\|\leq \int_0^1\|\partial_t H_j(t)\| dt\leq \sup_{0\leq t\leq 1}\|\partial_t H_j(t)\|.
\end{align*}
One may compute
$$
\partial_t H_j(t) =\sum_{\ell=0}^{t_j-1}(t\bar{T}'+(1-t)\bar{T})^\ell(\bar{T}-\bar{T}')(t\bar{T}'+(1-t)\bar{T})^{t_j-\ell-1}.
$$
Since $\|\bar{T}\|,\|\bar{T}'\|\leq \lambda_2^*$, this implies
$$ \sup_{0\leq t\leq 1}\|\partial_t H_j(t)\|\leq t_j(\lambda_2^*)^{t_j-1}\|\bar{T}-\bar{T}'\|.
$$
Therefore,
\begin{align*}
\sum_{j=1}^{J+1}\left\|\bar{T}^{t_j}-\left(\bar{T}'\right)^{t_j}\right\|^2&\leq \|\bar{T}-\bar{T}'\|^2\sum_{j=1}^{J+1} t_j^2(\lambda_2^*)^{2t^j-2}\\&\leq \|\bar{T}-\bar{T}'\|^2(\lambda_2^*)^{-2}\sum_{k=1}^\infty k^2 (\lambda_2^*)^{k}\\&\eqqcolon C_{\lambda_2^*}\|\bar{T}-\bar{T}'\|^2,
\end{align*}
as desired.
(The fact the series converges follows from the fact that $\lambda_2^*<1.$ In the case where $\lambda_2^*=0$, then series is zero and we can omit the term which includes a $(\lambda_2^*)^{-2}$ and instead choose $C_{\lambda_2^*}$ to be any positive constant.)
\end{proof}

\section*{Acknowledgements}

D.B. received funding from the Yale - Boehringer Ingelheim Biomedical Data Science Fellowship. M.P. acknowledges funding from The National Science Foundation under grant number OIA-2242769. S.K. and M.P. also acknowledge funding from NSF-DMS Grant No. 2327211.

\end{document}